\documentclass{article}

 \usepackage[preprint]{neurips_2026}

\usepackage[utf8]{inputenc} 
\usepackage[T1]{fontenc}    
\usepackage{url}            
\usepackage{booktabs}       
\usepackage{amsfonts}       
\usepackage{nicefrac}       
\usepackage{microtype}      
\usepackage{xcolor}         
\usepackage{geometry}
\usepackage{latexsym}
\usepackage{mathrsfs}
\usepackage{amsfonts}
\usepackage{amssymb}
\usepackage{subfigure}    
\usepackage{graphicx}
\usepackage{epstopdf}
\usepackage{float}
\usepackage{multirow}
\usepackage{bm}
\usepackage{amsmath}
\usepackage{amsthm}
\usepackage{color}
\usepackage[subnum]{cases}
\usepackage{rotating}
\usepackage{enumitem}
\usepackage{accents}
\usepackage{algorithm}
\usepackage{algorithmic}
\usepackage[title]{appendix}
\usepackage{mathtools}
\usepackage{graphicx}
\usepackage{subcaption}
\usepackage{booktabs}
\usepackage{url}
\usepackage{multirow}

\makeatletter
\@addtoreset{equation}{section}
\makeatother

\makeatletter
\def \wideubar{\underaccent{{\cc@style\underline{\mskip15mu}}}}
\def \widebar{\accentset{{\cc@style\underline{\mskip10mu}}}}
\makeatother

\graphicspath{{images/}}

\definecolor{blue}{rgb}{0,0,0.9}
\definecolor{red}{rgb}{0.9,0,0}
\definecolor{green}{rgb}{0,0.9,0}
\definecolor{lightgreen}{rgb}{0.1,0.5,0.1}

\newtheorem{proposition}{Proposition}[section]

\newtheorem{definition}{Definition}[section]
\newtheorem{lemma}{Lemma}[section]

\newtheorem{theorem}{Theorem}[section]
\newtheorem{remark}{Remark}[section]

\renewcommand{\cite}{\citep}

\usepackage[colorlinks=true,breaklinks=true,bookmarks=true,urlcolor=blue,
     citecolor=lightgreen,linkcolor=lightgreen,bookmarksopen=false,draft=false]{hyperref}

\title{A Provably Convergent and Practical Algorithm for Gromov--Wasserstein Optimal Transport}

\author{%
  Ling Liang\\
  Department of Mathematics\\
  The University of Tennessee\\
  Knoxville, TN 37916, USA \\
  \texttt{liang.ling@u.nus.edu} \\
  \And 
  Lei Yang \thanks{Corresponding author.} \\
  School of Computer Science and Engineering \\
  Guangdong Province Key Laboratory of Computational Science \\
  Sun Yat-sen University \\
  Guangzhou, Guangdong 510275, China\\ 
  \texttt{yanglei39@mail.sysu.edu.cn} \\
}

\begin{document}

\maketitle

\begin{abstract}
Gromov--Wasserstein optimal transport (GWOT) aligns metric measure spaces by matching their within-domain relational structures, but large-scale GWOT remains challenging because its objective is nonconvex and projection onto the transport polytope is often solved only approximately in practice. This leads to a gap between practical projected-gradient implementations and convergence theory, which typically assumes exact projections. For squared-loss GWOT, we propose an inexact projected-gradient framework with a verifiable feasibility-residual-based inexact condition for the projection subproblem. This condition is directly computable and avoids unknown quantities such as the exact projection point. Under this implementable condition, we prove subsequential convergence to stationary points and, with a mild tolerance-decay condition, convergence of the whole sequence. The resulting method retains the simplicity and sparsity of projected-gradient schemes while providing rigorous convergence guarantees, turning projected-gradient methods into a principled and scalable approach for GWOT with provable reliability.

\end{abstract}

\section{Introduction}

Let $(\mathcal{X},d_{\mathcal{X}})$ and $(\mathcal{Y},d_{\mathcal{Y}})$ be two metric measure spaces equipped with probability vectors ${a}\in\Delta_{n}:=\{{a}\in\mathbb{R}^n_+:\ {1}_n^\top{a}=1\}$ and ${b}\in\Delta_{m}:=\{{b}\in\mathbb{R}^m_+:\ {1}_m^\top{b}=1\}$, where
$1_n$ and $1_m$ denote the all-ones vectors of dimensions $n$ and $m$, respectively. Given intra-domain dissimilarity matrices $C_{\mathcal{X}}\in\mathbb{R}^{n\times n}$ and $C_{\mathcal{Y}}\in\mathbb{R}^{m\times m}$ with, e.g., $(C_{\mathcal{X}})_{ii'}=d_{\mathcal{X}}(x_i,x_{i'})$ and $(C_{\mathcal{Y}})_{jj'}=d_{\mathcal{Y}}(y_j,y_{j'})$, Gromov–Wasserstein optimal transport (GWOT) seeks a \emph{soft coupling} $\Pi\in\mathbb{R}^{n\times m}_+$ that aligns the \emph{relational structures} of the two spaces \cite{Memoli2011}. Unlike classical OT \cite{villani2009optimal}, which relies on a \emph{cross-domain} cost between $x_i$ and $y_j$, GWOT compares \emph{within-domain} relations. It searches for a coupling $\Pi$, under which pairs $(x_i,x_{i'})$ in $\mathcal{X}$ are matched, in aggregate, with pairs $(y_j,y_{j'})$ in $\mathcal{Y}$ having similar dissimilarity. This makes GWOT particularly useful when the two domains live in \emph{incomparable} feature spaces and only their internal metric structures are meaningful. As a result, GWOT has found applications in, e.g., geometry matching and analysis \cite{Memoli2011, PeyreCuturiSolomon2016GW}, graph matching and network alignment \cite{XuLuoZhaCarin2019GWL}, heterogeneous domain adaptation \cite{YanEtAl2018SSOT_HDA}, and single-cell multi-omics alignment \cite{DemetciEtAl2020SCOT}.

Define the transport polytope $\mathcal{U}({a},{b}) :=\left\{\Pi\in\mathbb{R}^{n\times m}_+:\ \Pi{1}_m={a},\ \Pi^\top{1}_n={b}\right\}$. Given a loss function $\ell:\mathbb{R}\times\mathbb{R}\to\mathbb{R}_+$, the GWOT problem is given as follows:
\begin{equation}\label{eq:gworg}
\min_{\Pi\in\mathcal{U}({a},{b})}
\ \mathcal{E}_{\mathrm{GW}}(\Pi)
:=\sum_{i,i'=1}^{n}\sum_{j,j'=1}^{m}
\ell\!\left((C_{\mathcal{X}})_{ii'},(C_{\mathcal{Y}})_{jj'}\right)\,\Pi_{ij}\,\Pi_{i'j'}.
\end{equation}
The objective is generally \emph{nonconvex} due to the bilinear terms $\Pi_{ij}\Pi_{i'j'}$. In this work, we focus on the squared loss $\ell(r,s)=(r-s)^2$, for which the objective admits the decomposition
\begin{equation*}
\mathcal{E}_{\mathrm{GW}}(\Pi)
=
\underbrace{\sum_{i,i'} (C_{\mathcal{X}})_{ii'}^2\,a_i a_{i'}}_{\text{constant}}
+
\underbrace{\sum_{j,j'} (C_{\mathcal{Y}})_{jj'}^2\,b_j b_{j'}}_{\text{constant}}
-
\underbrace{2\sum_{i,i',j,j'} (C_{\mathcal{X}})_{ii'}(C_{\mathcal{Y}})_{jj'}\,\Pi_{ij}\Pi_{i'j'}}_{2\langle C_{\mathcal{X}}\Pi C_{\mathcal{Y}},\,\Pi\rangle}.
\end{equation*}
Therefore, problem \eqref{eq:gworg} is equivalent to
\begin{equation}\label{eq:gw}
\min_{\Pi\in\mathcal{U}({a},{b})}~f(\Pi):= -\langle C_{\mathcal{X}}\Pi C_{\mathcal{Y}},\,\Pi\rangle,
\end{equation}
where $f$ is $L_f$-smooth with $L_f:=2\|C_\mathcal{X}\|_2\|C_\mathcal{Y}\|_2 > 0$.
We highlight that, although we present the analysis for squared-loss GWOT, the proposed framework applies more broadly to GWOT-type problems with Lipschitz-smooth losses, including the fused GWOT \cite{ma2023fused}.

\begin{figure}[t]
\centering
\includegraphics[width=1\linewidth]{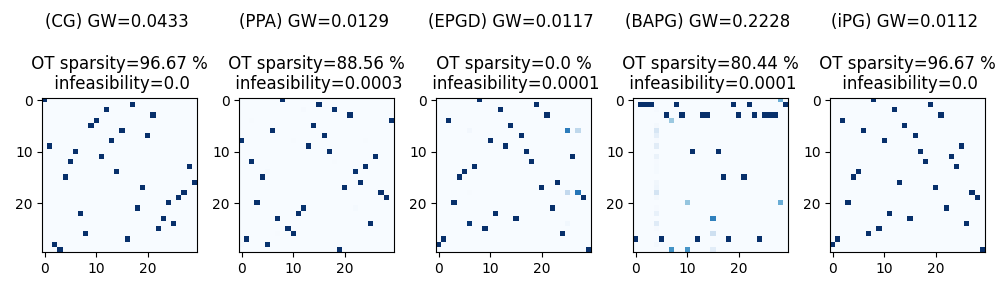}
\caption{GW-distance between samples from 2 Gaussian distributions in 2- and 3-D spaces; see \url{https://pythonot.github.io/auto_examples/index.html}. CG: Conditional Gradient Method \cite{tctf2019optimal}; {PPA}: Entropic Proximal Point Algorithm \cite{XuLuoZhaCarin2019GWL}; EPGD: Entropic Projected Gradient Descent Method \cite{PeyreCuturiSolomon2016GW}; BAPG: Bregman Alternating Projected Gradient {method \cite{li2023convergent}}; iPG: our practical inexact (Euclidean) Projected Gradient method. iPG demonstrates an excellent balance between the GW loss, OT sparsity, and constraint violation.}\label{fig:eg-pot-random}
\end{figure}

Given the structure of problem \eqref{eq:gw}, (Euclidean) projected gradient (PG) methods are among the simplest first-order schemes for solving it: each iteration takes a gradient step and then projects onto the transport polytope. This gives a natural non-entropic approach that preserves the sparse structure expected from solutions of unregularized GWOT problems. In practice, however, exact Euclidean projection onto $\mathcal{U}(a,b)$ can be expensive at scale. Implementations therefore often compute the projection only approximately, using an inner iterative solver with a heuristic stopping rule. This creates a gap between practical PG implementations and standard convergence theory, which typically assumes that each projection step is solved exactly. It also helps explain why much recent work on large-scale GWOT has focused on \textit{single-loop} first-order methods; see, e.g., \cite{li2023convergent} and the references therein. By avoiding an inner projection, such methods sidestep the need to specify how accurately the projection subproblem should be solved.

Nevertheless, this leaves open an essential question: \emph{can the simple PG framework be made both practical and rigorous when the projection is computed only approximately?} We answer this question affirmatively. The key idea is to replace the exact projection requirement with a verifiable inexact condition for the projection subproblem. This condition is formulated by the primal feasibility residual associated with the dual iterate, and is therefore directly computable during the inner solve. In particular, it avoids unknown quantities such as the exact projection point. Under this implementable condition, we establish rigorous convergence guarantees for the resulting inexact PG iterates. Consequently, the proposed PG framework retains its simplicity, sparsity, and scalability while admitting a convergence theory that reflects how the method is truly implemented in practice. Figure~\ref{fig:eg-pot-random} illustrates the performance of our method compared with representative GWOT solvers.

\paragraph{Contributions.}
Our contributions in this work are summarized as follows.
\begin{itemize}[leftmargin=0.5cm]
\item \textbf{An inexact PG framework with a simple verifiable inexact condition.} We develop an inexact PG (iPG) method for GWOT in which the projection onto the transport polytope $\mathcal U(a,b)$ is solved only approximately. The inexact condition is constructed from the primal feasibility residual associated with the dual iterate, making it directly computable during the inner solve while avoiding inaccessible quantities such as the exact projection point.

\item \textbf{Convergence guarantees under inexact computations.} We prove subsequential convergence of the generated inexact iterates to stationary points of the GWOT problem under summable projection errors. We further establish convergence of the whole sequence under an additional mild tolerance-decay condition. These results align the theory with practical implementations. 

\item \textbf{Empirical validation on graph alignment.} We evaluate the proposed method on graph-alignment tasks and compare it with representative GWOT solvers. The results show that iPG produces sparse and nearly feasible transport plans while maintaining competitive objective values, alignment accuracy, and runtime.
\end{itemize}

\subsection{Related Work}

\paragraph{Computational optimal transport.} Optimal transport (OT) provides a geometric framework for comparing probability measures through a cost-minimizing mass transportation problem. In the classical Monge formulation, one seeks a deterministic transport map, which is often difficult to compute. The Kantorovich relaxation instead optimizes over transport plans (couplings), leading to a linear programming problem that forms the foundation of modern OT theory and computation \cite{kantorovich2006translocation}. In data science, computational OT \cite{peyre2019computational} has been driven largely by entropic regularization, which smooths the Kantorovich problem and reduces it to a matrix-scaling problem solvable by Sinkhorn iterations. This has enabled fast and differentiable approximations of Wasserstein-type distances \cite{cuturi2013sinkhorn}. Recent advances in entropic OT have improved scalability through stochastic and semi-discrete or continuous dual methods that require only sampling access to distributions \cite{genevay2016stochastic}, faster algorithms with explicit complexity guarantees such as Greenkhorn and near-linear-time approximation methods \cite{altschuler2017near}, and high-dimensional approximations including low-rank formulations \cite{scetbon2021low} and projection-based approaches such as sliced OT \cite{bonneel2015sliced}. Beyond entropic methods, which generally provide approximate OT solutions, a growing body of work has focused on computing exact OT solutions using splitting methods \cite{liang2024accelerating,lu2024pdot,yang2021fast,zhang2025hot}, entropic proximal point methods \cite{chu2023efficient,wu2025pins,xie2020fast,yang2022bregman}, augmented Lagrangian methods \cite{li2020asymptotically,yang2024corrected,zhu2024ripalm}, and Newton-type methods \cite{hou2024sparse,liu2022multiscale}. These algorithmic advances have further expanded the impact of OT in machine learning, with fruitful applications as mentioned earlier.

\paragraph{Algorithmic frameworks for GWOT.} Algorithmic developments for GWOT can be broadly grouped into several lines. A classical and still widely used approach is conditional-gradient (CG) (a.k.a., Frank--Wolfe-type \cite{FrankWolfe1956}) optimization, which repeatedly linearizes the nonconvex quadratic GW objective over the transport polytope and solves OT-type subproblems, leading to simple iterations but often relatively slow convergence \cite{tctf2019optimal}. A second major line relies on entropic regularization, where the entropic projected gradient descent (EPGD) method combines gradient steps with Sinkhorn-type projections and has become a foundational computational approach for approximate GWOT \cite{PeyreCuturiSolomon2016GW}. Related proximal regularization ideas were further developed in the entropic proximal point algorithm ({PPA}), which solves a sequence of better-conditioned regularized GW subproblems and improves numerical stability \cite{XuLuoZhaCarin2019GWL}. Beyond these basic first-order schemes, several scalable variants have been proposed for graph matching and partitioning. A representative example is the S-GWL framework, which exploits multi-scale and recursive strategies to extend GW-based matching to larger graphs \cite{XuLuoZhaCarin2019GWL}. More recent work has pursued stronger algorithmic simplifications, broader modeling variants, and sharper guarantees. The Bregman alternating projected gradient
(BAPG) method gives a provable single-loop first-order scheme for a relaxed GW formulation \cite{li2023convergent}. Semi-relaxed and unbalanced GW formulations enlarge the modeling scope and lead to more flexible algorithms for graphs and measures with mass variation or outliers
\cite{sejourne2021unbalanced,vincent2022semirelaxed}. Low-rank methods can dramatically reduce computational complexity and yield near-linear or linear-time approximations in favorable regimes \cite{scetbon2022linear}, while semidefinite relaxations provide a complementary direction aimed at stronger lower bounds and, in some cases, certificates of global optimality beyond local stationarity \cite{chen2024sdpGW}. In this landscape, our practical inexact Euclidean projected-gradient method belongs to the class of non-entropic first-order methods. Its distinguishing feature is that it works directly over the original transport polytope while allowing the projection subproblem to be solved only approximately under a simple and verifiable inexact condition.



\paragraph{Notation.} We use $\mathbb{R}^n$ ($\mathbb{R}_{+}^n$) and $\mathbb{R}^{n \times m}$ ($\mathbb{R}_{+}^{n \times m}$) to denote the sets of $n$-dimensional real (non-negative) vectors and $n \times m$ real (non-negative) matrices, respectively. For a vector $x\in\mathbb{R}^n$, $\|x\|_1$ and $\|x\|_2$ denote its $\ell_1$ and $\ell_2$ norms, respectively. For a matrix $X\in\mathbb{R}^{n\times m}$, $\|X\|_2$ and $\|X\|_F$ denote its spectral and Frobenius norms, respectively, and we write $\|X\|_1=\sum_{ij}|X_{ij}|$. For a proper closed convex function
$h:\mathbb{R}^{n\times m}\to(-\infty,\infty]$ and a scalar $\delta\ge0$, the
$\delta$-subdifferential of $h$ at $X\in\mathrm{dom}\,h$ is defined by
$\partial_\delta h(X):=\left\{
D\in\mathbb{R}^{n\times m}:
h(Y)\geq h(X)+\langle D,Y-X\rangle-\delta,~~
\forall\,Y\in\mathbb{R}^{n\times m}
\right\}$.
When $\delta=0$, $\partial_\delta h$ reduces to the classical convex
subdifferential $\partial h$. For a closed convex set
$\mathcal{C}\subseteq\mathbb{R}^{n\times m}$, its indicator function is denoted
by $\delta_{\mathcal{C}}$, where $\delta_{\mathcal{C}}(X)=0$ if
$X\in\mathcal{C}$ and $\delta_{\mathcal{C}}(X)=+\infty$ otherwise. We use
$\mathrm{Proj}_{\mathcal{C}}(X)$ to denote the Euclidean projection of $X$ onto
$\mathcal{C}$, and $\mathrm{dist}(X, \mathcal{C})$ to denote the distance from $X$ to $\mathcal{C}$, i.e., $\mathrm{dist}(X, \mathcal{C}) := \inf_{Y\in\mathcal{C}}\|Y-X\|$.

\section{A Provably Convergent Inexact Projected Gradient Framework}\label{sec-ipg}

For simplicity, we denote the linear mapping $\mathcal{A}:\mathbb{R}^{n\times m}\to \mathbb{R}^{n+m}$ and the vector $r \in \mathbb{R}^{n+m}$ as $\mathcal{A}\Pi\;:=\;
\begin{bmatrix}
\Pi1_m &
\Pi^\top1_n
\end{bmatrix}^\top
\in\mathbb{R}^{n+m}$, and $r:= \begin{bmatrix}
a & b
\end{bmatrix}^\top \in \mathbb{R}^{n+m}.$
The adjoint mapping of $\mathcal{A}$ is denoted as $\mathcal{A}^*$. In this section, building on the classical projected-gradient (PG) framework for solving the GWOT problem \eqref{eq:gw}, we develop a novel practical \emph{inexact projected gradient} (iPG) method. Our key contribution is a novel feasibility-residual-based inexact condition that is \emph{simple to check in practice} and \emph{truly implementable}, avoiding unknown quantities and hard-to-verify requirements, yet still yields strong convergence guarantees comparable to those of the exact PG method. The complete algorithmic framework is presented in Algorithm \ref{algo-iPG}.

\begin{algorithm}[htb!]
\caption{iPG for GWOT.}\label{algo-iPG}
\textbf{Input:} Let $\{\varepsilon_k\}_{k=0}^{\infty}$ be a sequence of nonnegative scalars satisfying $\sum_{k=0}^\infty \varepsilon_k < \infty$, and let $\gamma > L_f$ be the inverse learning rate.
Choose $\Pi^0 \in\mathbb{R}_+^{n\times m}$ arbitrarily.
Set $k=0$.   \\
\textbf{while} a termination criterion is not met, \textbf{do} \vspace{-1mm}
\begin{itemize}[leftmargin=2cm]
\item[\textbf{Step 1}.] Find a vector $y^{k+1}$ such that the matrix
    \begin{equation}\label{defHk}
    \Pi^{k+1}:= \mathrm{Proj}_{\mathbb{R}_+^{n\times m}}\left(\mathcal{A}^*y^{k+1} + \Pi^k - \tfrac{1}{\gamma}\nabla f(\Pi^k)\right) \in \mathbb{R}_+^{n\times m}
    \end{equation}
    is an approximate solution of the projection subproblem
    \begin{equation}\label{subpro-iPGM}
    \min\limits_{\Pi\in \mathcal{U}(a,b)}~\langle \nabla f(\Pi^k), \,\Pi-\Pi^k\rangle + \frac{\gamma}{2}\|\Pi-\Pi^k\|_F^2,
    \end{equation}
    satisfying the following inexact condition:
    \begin{equation}\label{eq-inexact-ipg-subpro}
    \|\mathcal{A}\Pi^{k+1} - r\|_2 \leq \frac{\varepsilon_k}{1+\|y^{k+1}\|_2}.
    \end{equation}

\item[\textbf{Step 2}.] Set $k = k+1$ and go to \textbf{Step 1}. \vspace{-1mm}
\end{itemize}
\textbf{end while}  \\
\textbf{Output}: $\Pi^k$ \vspace{0.5mm}
\end{algorithm}

\paragraph{Well-definiteness of Algorithm~\ref{algo-iPG}.}
We first show that the algorithm is well-defined. Specifically, fix any $k\ge 0$ and any prescribed tolerance $\varepsilon_k>0$. We claim that there always exists a dual vector $y^{k+1}\in\mathbb{R}^{n+m}$ such that
\begin{equation}\label{eq:well_defined_goal}
\Big\|\mathcal{A}\,\mathrm{Proj}_{\mathbb{R}_+^{n\times m}}\Big(\mathcal{A}^*y^{k+1}+\Pi^k-\tfrac{1}{\gamma}\nabla f(\Pi^k)\Big)-r\Big\|_2
\leq\frac{\varepsilon_k}{1+\|y^{k+1}\|_2}.
\end{equation}
To justify~\eqref{eq:well_defined_goal}, consider the dual formulation of the projection subproblem~\eqref{eq-inexact-ipg-subpro}, which is the unconstrained convex minimization problem
\begin{equation}\label{eq-dual-proj}
\min_{y\in\mathbb{R}^{n+m}}\;
g_k(y):=\frac{1}{2}\Big\|\mathrm{Proj}_{\mathbb{R}_+^{n\times m}}\Big(\mathcal{A}^*y+\Pi^k-\tfrac{1}{\gamma}\nabla f(\Pi^k)\Big)\Big\|_F^2-\langle r,y\rangle.
\end{equation}
A direct calculation gives $ \nabla g_k(y) = \mathcal{A}\,\mathrm{Proj}_{\mathbb{R}_+^{n\times m}}\Big(\mathcal{A}^*y+\Pi^k-\tfrac{1}{\gamma}\nabla f(\Pi^k)\Big)-r$.
Let $y^{k,*}$ be any minimizer of~\eqref{eq-dual-proj}. By the first-order optimality condition, we have that $\nabla g_k(y^{k,*})=0$ and the associated primal solution is $\Pi^{k,*} = \mathrm{Proj}_{\mathbb{R}_+^{n\times m}}\Big(\mathcal{A}^*y^{k,*}+\Pi^k-\tfrac{1}{\gamma}\nabla f(\Pi^k)\Big)$, 
which is the (unique) optimal solution of the primal projection subproblem \eqref{subpro-iPGM}. Moreover, since $g_k$ is convex and continuously differentiable, problem \eqref{eq-dual-proj} can be solved by standard algorithms for smooth convex optimization, such as accelerated gradient methods
\cite{beck2009fast,nesterov1983method} or Newton-type methods
\cite{liang2021inexact,yang2023inexact,zhao2010newton}. In particular, one can
apply a convergent inner solver to generate a sequence $\{y^{k,t}\}_{t\ge 0}$
satisfying $y^{k,t}\to y^{k,*}$ as $t\to\infty$ for some minimizer $y^{k,*}$.
By the continuity of $\nabla g_k$ and the optimality condition
$\nabla g_k(y^{k,*})=0$, we obtain
\begin{equation*}
\|\nabla g_k(y^{k,t})\|_2
=\Big\|\mathcal{A}\,\mathrm{Proj}_{\mathbb{R}_+^{n\times m}}\!\Big(
\mathcal{A}^*y^{k,t}+\Pi^k-\tfrac{1}{\gamma}\nabla f(\Pi^k)\Big)-r\Big\|_2
\longrightarrow 0
\quad\text{as}~~t\to\infty.
\end{equation*}
Consequently, if $\varepsilon_k>0$, there exists an inner iteration index $t$
large enough such that $y^{k+1}:=y^{k,t}$ satisfies~\eqref{eq:well_defined_goal};
if $\varepsilon_k=0$, one may take $y^{k+1}=y^{k,*}$. Thus, the inexact
condition~\eqref{eq-inexact-ipg-subpro} is attainable at every outer iteration
$k$, and Algorithm~\ref{algo-iPG} is well defined.

\begin{remark}[\textbf{Comments on scaling by $1+\|y^{k+1}\|_2$ in \eqref{eq-inexact-ipg-subpro}}]\label{rem:why_y_scaling}
The factor $(1+\|y^{k+1}\|_2)^{-1}$ is included to make \eqref{eq-inexact-ipg-subpro} robust
to potentially large dual iterates. In general, without additional regularity
assumptions, there is no guarantee that the sequence
$\{\|y^{k+1}\|_2\}_{k\ge 0}$ remains uniformly bounded. Since the 
analysis involves products between the dual iterate and the primal feasibility
residual, the normalization ensures that these products remain controlled even
when $\|y^{k+1}\|_2$ is large. At the same time, the condition still forces the
primal feasibility residual to vanish as $\varepsilon_k\to0$.
\end{remark}

We next analyze the convergence behavior of the proposed algorithm. We first
establish \emph{subsequential convergence} to stationary points and then prove
\emph{full sequential convergence} of the whole sequence. Proofs of all theoretical results are deferred to the Appendices. For notational convenience, let $\varepsilon_{-1}:=\|\mathcal{A}\Pi^0-r\|_2$ and $E:=\sum_{k=-1}^{\infty}\varepsilon_k<\infty$. These two quantities are used
only in the analysis and are not needed in practical implementations.

\begin{theorem}\label{thm-subsequential-convergence}
Suppose that $\sum_{k=0}^{\infty}\varepsilon_k<\infty$. Let $\{\Pi^k\}_{k=0}^{\infty}$ be the sequence generated by Algorithm \ref{algo-iPG}. Then, there exists a shadow sequence $\{\widetilde{\Pi}^{k}\}_{k=0}^{\infty}$ satisfying $\widetilde{\Pi}^{k}\in \mathcal{U}(a,b)$ and $\|\widetilde{\Pi}^{k} - \Pi^{k}\|_F\leq 2\sqrt{m+n}\,\|\mathcal{A}\Pi^k-r\|_2$ such that the following statements hold.
\begin{enumerate}[leftmargin=0.5cm]
\item \textbf{(Boundedness)} Both sequences $\{\Pi^k\}$ and $\{\widetilde{\Pi}^{k}\}$ are bounded;

\item \textbf{(Approximate sufficient descent property)} For any $k\geq0$, it holds that
    \begin{equation}\label{eq-sufficient-descent}
    f(\widetilde{\Pi}^{k+1}) - f(\widetilde{\Pi}^k) \leq - \frac{\gamma-L_f}{2}\|\widetilde{\Pi}^{k+1} - \Pi^k\|_F^2 - \frac{\gamma}{2}\|\widetilde{\Pi}^{k} - \Pi^{k+1}\|_F^2 + \xi_k,
    \end{equation}
    where $\xi_k:=4\gamma(m+n)(\varepsilon_{k-1}^2 + \varepsilon_{k}^2) + (6E+4)2\gamma(m+n)\varepsilon_k$;

\item \textbf{(Shared accumulation points)} The limit $f^*:=\lim\limits_{k\to \infty}f(\widetilde{\Pi}^k)$ exists, and $\lim\limits_{k\to\infty}\; \|\widetilde{\Pi}^{k+1} - \Pi^k\|_F = \lim\limits_{k\to\infty}\; \|\widetilde{\Pi}^{k} - \Pi^{k+1}\|_F = 0$. Moreover, any accumulation point of $\{\Pi^k\}$ is an accumulation point of $\{\widetilde{\Pi}^{k}\}$, and vice versa;

\item \textbf{(Subsequential convergence)} Any accumulation point $\Pi^*$ of $\{\Pi^k\}$ or $\{\widetilde{\Pi}^{k}\}$ is a stationary point of the GWOT problem \eqref{eq:gw}, i.e., $0 \in \partial \delta_{\mathcal{U}(a,b)}(\Pi^{*}) + \nabla f(\Pi^*)$. Moreover, $f(\Pi^*) = f^*$ for any accumulation point  $\Pi^*$ of $\{\Pi^k\}$ or $\{\widetilde{\Pi}^{k}\}$.
\end{enumerate}
\end{theorem}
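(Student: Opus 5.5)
The proof starts by building the shadow sequence with a Hoffman-type or explicit rounding argument. For $\Pi\in\mathbb{R}^{n\times m}_+$ with marginal residual $\mathcal{A}\Pi-r$, I would use the rounding procedure of Altschuler et al. First scale down rows and columns whose marginals exceed $a_i$ or $b_j$. Then add the rank-one correction $\mathrm{err}_a\mathrm{err}_b^\top/\|\mathrm{err}_a\|_1$. The result lies in $\mathcal{U}(a,b)$, and the perturbation is bounded in $\ell_1$ by $2(\|\Pi 1_m-a\|_1+\|\Pi^\top 1_n-b\|_1)$. Since $\|\cdot\|_F\le\|\cdot\|_1$ and $\|v\|_1\le\sqrt{m+n}\|v\|_2$, this gives $\|\widetilde\Pi^k-\Pi^k\|_F\le 2\sqrt{m+n}\|\mathcal{A}\Pi^k-r\|_2$. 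For $k\ge1$ the right-hand side is at most $2\sqrt{m+n}\,\varepsilon_{k-1}$, and for $k=0$ it is at most $2\sqrt{m+n}\,\varepsilon_{-1}$. Boundedness follows at once: $\widetilde\Pi^k\in\mathcal{U}(a,b)$, which is compact, and $\Pi^k$ stays within $2\sqrt{m+n}\max_k\varepsilon_{k-1}\le 2\sqrt{m+n}E$ of it.

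For the descent inequality, I would treat $\Pi^{k+1}$ as an approximate minimizer of the strongly convex model $\phi_k(\Pi)=\langle\nabla f(\Pi^k),\Pi-\Pi^k\rangle+\frac{\gamma}{2}\|\Pi-\Pi^k\|_F^2+\delta_{\mathcal{U}}(\Pi)$. Write $Z=\mathcal{A}^*y^{k+1}+\Pi^k-\frac1\gamma\nabla f(\Pi^k)$, so that $\Pi^{k+1}=\mathrm{Proj}_+(Z)$. Then $\gamma(Z-\Pi^{k+1})\in\partial\delta_{\mathbb{R}_+}(\Pi^{k+1})$. Hence for any $U\in\mathcal{U}(a,b)$,
\[
\langle \nabla f(\Pi^k)+\gamma(\Pi^{k+1}-\Pi^k)-\gamma\mathcal{A}^*y^{k+1},\,U-\Pi^{k+1}\rangle\ge0 .
\]
The dual term equals $\gamma\langle y^{k+1},r-\mathcal{A}\Pi^{k+1}\rangle$, which is bounded by $\gamma\|y^{k+1}\|\varepsilon_k/(1+\|y^{k+1}\|)\le\gamma\varepsilon_k$. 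This is exactly where the scaling in \eqref{eq-inexact-ipg-subpro} is used. I would take $U=\widetilde\Pi^k$ and expand the quadratic with the three-point identity. This gives
\[
\phi_k(\Pi^{k+1})+\tfrac\gamma2\|\widetilde\Pi^k-\Pi^{k+1}\|_F^2\le\phi_k(\widetilde\Pi^k)+\gamma\varepsilon_k .
\]
I would then swap $\Pi^{k+1}$ for $\widetilde\Pi^{k+1}$ on the left. The linear part changes by at most $(\|\nabla f(\Pi^k)\|+\gamma\cdot\text{bounded diameter})\cdot 2\sqrt{m+n}\,\varepsilon_k$, where the bounded diameter comes from part 1 and is controlled through $E$. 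The squared distances change by cross terms of the same order plus $4(m+n)\varepsilon_k^2$-type terms. Combining with the descent lemma $f(\widetilde\Pi^{k+1})\le f(\Pi^k)+\langle\nabla f(\Pi^k),\widetilde\Pi^{k+1}-\Pi^k\rangle+\frac{L_f}{2}\|\widetilde\Pi^{k+1}-\Pi^k\|^2$ and with $f(\Pi^k)+\phi_k(\widetilde\Pi^k)\le f(\widetilde\Pi^k)+\frac{L_f}{2}\|\widetilde\Pi^k-\Pi^k\|^2+\frac\gamma2\|\widetilde\Pi^k-\Pi^k\|^2$ yields \eqref{eq-sufficient-descent}. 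The last term there is $O(\gamma(m+n)\varepsilon_{k-1}^2)$. \textbf{The main obstacle} is this bookkeeping: matching the constant $(6E+4)2\gamma(m+n)$ requires bounding $\|\Pi^k\|_F$, $\|\Pi^{k+1}-\Pi^k\|_F$ and $\frac1\gamma\|\nabla f(\Pi^k)\|_F$ uniformly in terms of $E$. Here I would use $\|\Pi\|_F\le\|\Pi\|_1\le 1+\sqrt{m+n}\,\|\mathcal{A}\Pi-r\|$ for nonnegative $\Pi$, together with $\|\nabla f(\Pi)\|_F\le L_f\|\Pi\|_F<\gamma\|\Pi\|_F$.

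Parts 3 and 4 are then standard. Since $\sum\xi_k<\infty$ and $f$ is bounded below on the compact set $\mathcal{U}(a,b)$, the sequence $f(\widetilde\Pi^k)+\sum_{j\ge k}\xi_j$ is nonincreasing and bounded below. So it converges, and $f^*$ exists. Summing \eqref{eq-sufficient-descent} shows that both squared distances are summable, hence tend to $0$. Because $\|\widetilde\Pi^k-\Pi^k\|\to0$, the two sequences share accumulation points, and also $\|\Pi^{k+1}-\Pi^k\|\to0$. For stationarity, take a subsequence $\Pi^{k_j}\to\Pi^*$; then $\Pi^{k_j+1}\to\Pi^*$ as well. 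In the variational inequality above, for any fixed $U\in\mathcal{U}(a,b)$, the dual term is at most $\gamma\varepsilon_k\to0$. Passing to the limit by continuity of $\nabla f$ gives $\langle\nabla f(\Pi^*),U-\Pi^*\rangle\ge0$, i.e.\ $-\nabla f(\Pi^*)\in\partial\delta_{\mathcal{U}}(\Pi^*)$, with $\Pi^*\in\mathcal{U}(a,b)$ since the residual vanishes. Finally, $f(\Pi^*)=\lim f(\widetilde\Pi^{k_j})=f^*$ by continuity.
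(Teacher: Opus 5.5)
Your proposal is correct, and in the key step it takes a genuinely different route from the paper. The overall skeleton (Altschuler rounding for the shadow sequence, approximate descent, summability, passage to the limit) is the same.

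\textbf{The paper's route.} The paper works at the feasible shadow point. It introduces the nonnegativity multiplier $\Gamma^{k+1}=\gamma\bigl(\Pi^{k+1}-(\mathcal{A}^*y^{k+1}+\Pi^k-\tfrac{1}{\gamma}\nabla f(\Pi^k))\bigr)\ge 0$ and tests the KKT relation against $\Pi-\widetilde{\Pi}^{k+1}$. Since both points lie in $\mathcal{U}(a,b)$, the $y^{k+1}$-term cancels exactly, and only $\|\Gamma^{k+1}\|_F\,\|\widetilde{\Pi}^{k+1}-\Pi^{k+1}\|_F$ remains. Because $\|\Gamma^{k+1}\|_F$ is bounded only by $\gamma\sqrt{m+n}\,\|y^{k+1}\|_2$ plus a constant, the normalization in \eqref{eq-inexact-ipg-subpro} is what tames it. The result is the certificate $-\nabla f(\Pi^k)-\gamma(\Pi^{k+1}-\Pi^k)\in\partial_{\widehat{\varepsilon}_k}\delta_{\mathcal{U}(a,b)}(\widetilde{\Pi}^{k+1})$. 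The paper then uses a four-point identity with $\Pi=\widetilde{\Pi}^k$, and gets Statement 4 from outer semicontinuity of $\varepsilon$-subdifferentials.

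\textbf{Your route.} You write the variational inequality at the infeasible computed point $\Pi^{k+1}$, using only the normal cone of $\mathbb{R}^{n\times m}_+$. The sole error is then $\gamma\langle y^{k+1},\mathcal{A}\Pi^{k+1}-r\rangle$, which is at most $\gamma\varepsilon_k$ in one line. The price is an extra perturbation step that moves the model value from $\Pi^{k+1}$ to $\widetilde{\Pi}^{k+1}$. That step needs uniform bounds on $\nabla f(\Pi^k)$ and $\Pi^{k+1}-\Pi^k$, but no bound on any multiplier.

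\textbf{What each buys.} Your approach isolates exactly where the scaling is used, namely only in $\|y^{k+1}\|_2\|\mathcal{A}\Pi^{k+1}-r\|_2\le\varepsilon_k$. It also makes stationarity fully elementary: you pass to the limit in the inequality for each fixed $U\in\mathcal{U}(a,b)$. The paper's approach instead packages each iterate as an explicit inexact-optimality certificate at a feasible point.

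\textbf{The bookkeeping closes.} With $\|\Pi^k\|_F\le 1+\sqrt{m+n}\,E$ and $\|\nabla f(\Pi^k)\|_F\le\gamma\|\Pi^k\|_F$, the swap costs at most $6\gamma(m+n)(1+E)\varepsilon_k+2\gamma(m+n)\varepsilon_k^2$. Adding the dual error $\gamma\varepsilon_k$ and the bound $\tfrac{\gamma+L_f}{2}\cdot 4(m+n)\varepsilon_{k-1}^2\le 4\gamma(m+n)\varepsilon_{k-1}^2$ gives a total error no larger than the stated $\xi_k$.

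\textbf{One notational fix.} Do not include $\delta_{\mathcal{U}}$ in $\phi_k$ when you evaluate it at $\Pi^{k+1}$. That point is generally infeasible, so $\phi_k(\Pi^{k+1})=+\infty$ and your displayed inequality would be false as written. Use the smooth quadratic model until after you have swapped in $\widetilde{\Pi}^{k+1}$.
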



Theorem~\ref{thm-subsequential-convergence} establishes subsequential convergence
by constructing a \emph{feasible} shadow sequence $\{\widetilde{\Pi}^k\}$ close
to the possibly \emph{infeasible} iterates $\{\Pi^k\}$. This is sufficient for
proving that all accumulation points are stationary. However, to prove convergence
of the whole sequence, one needs a stronger mechanism: a descent sequence that is
compatible with the projected gradient geometry and the Kurdyka--{\L}ojasiewicz
(KL) framework (see \cite{abs2013convergence,bdl2007the,bst2014proximal} for more details). The auxiliary sequence $\{\widetilde{\Pi}^k\}$ is not uniquely
defined and is not generated by the exact projected gradient step, so it is not
the most convenient object for a full sequential convergence argument.

For this reason, we introduce a second shadow sequence, namely, the exact
projection points
\begin{equation}\label{defHstar}
\Pi^{k,*}
:=\arg\min_{\Pi\in\mathcal{U}(a,b)}
\frac{1}{2}\left\|\Pi-\left(\Pi^k-\frac{1}{\gamma}\nabla f(\Pi^k)\right)\right\|_F^2.
\end{equation}
The sequence $\{\Pi^{k,*}\}$ plays the role of the ideal exact PG iterate. If the
computed inexact iterate $\Pi^{k+1}$ can be controlled relative to
$\Pi^{k,*}$, then the implementable residual condition can be converted into an
approximate sufficient descent property for $\{\Pi^{k,*}\}$. This is the key
ingredient needed to combine the inexact PG recursion with the KL property and
establish full sequential convergence.

The following proposition provides precisely this bridge. It shows that the
primal feasibility residual $\|\mathcal{A}\Pi^{k+1}-r\|_2$ controls the distance from the computed inexact projection $\Pi^{k+1}$ to the exact projection $\Pi^{k,*}$, uniformly in $k$.

\begin{proposition}[Error bound for the inexact projection step]\label{prop:projection-error-bound}
Let $\{\Pi^k\}$ be generated by Algorithm~\ref{algo-iPG}, and let $\{\Pi^{k,*}\}$ be defined by \eqref{defHstar}. Then, there exists a constant $\omega>0$, independent of $k$, such that $\|\Pi^{k+1}-\Pi^{k,*}\|_F \leq \omega\|\mathcal{A}\Pi^{k+1}-r\|_2, \; \forall\,k\geq0$.
Consequently, if the inexact condition \eqref{eq-inexact-ipg-subpro} holds, then $\|\Pi^{k+1}-\Pi^{k,*}\|_F \leq \omega\varepsilon_k, \; \forall\,k\geq0.$
\end{proposition}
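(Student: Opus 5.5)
Set $Z^k:=\Pi^k-\frac{1}{\gamma}\nabla f(\Pi^k)$. The plan is to view $\Pi^{k+1}$ as the exact projection of $Z^k$ onto a \emph{perturbed} polytope and then use Lipschitz stability of polyhedral projections. Put $r^{k+1}:=\mathcal{A}\Pi^{k+1}$. By \eqref{defHk}, $\Pi^{k+1}=\mathrm{Proj}_{\mathbb{R}^{n\times m}_+}(\mathcal{A}^*y^{k+1}+Z^k)$. The Moreau decomposition gives
\[
\Pi^{k+1}-Z^k-\mathcal{A}^*y^{k+1}\in -N_{\mathbb{R}_+^{n\times m}}(\Pi^{k+1}),
\]
which means $Z^k-\Pi^{k+1}\in -\mathcal{A}^*y^{k+1}+N_{\mathbb{R}^{n\times m}_+}(\Pi^{k+1})$. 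Therefore $\Pi^{k+1}$ satisfies the KKT conditions, and hence is the unique solution, of
\[
\min\Big\{\tfrac12\|\Pi-Z^k\|_F^2:\ \Pi\ge0,\ \mathcal{A}\Pi=r^{k+1}\Big\}.
\]
In other words, $\Pi^{k+1}=\mathrm{Proj}_{\mathcal{U}_{k+1}}(Z^k)$ with $\mathcal{U}_{k+1}:=\{\Pi\ge0:\mathcal{A}\Pi=r^{k+1}\}$, while $\Pi^{k,*}=\mathrm{Proj}_{\mathcal{U}(a,b)}(Z^k)$. Both projections are taken of the same point, and only the right-hand side of the linear constraint differs.

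Next, I would bound $\|\mathrm{Proj}_{\mathcal{C}_1}(Z)-\mathrm{Proj}_{\mathcal{C}_2}(Z)\|_F$ for $\mathcal{C}_i=\{\Pi\ge0:\mathcal{A}\Pi=s_i\}$, using only the Hausdorff distance of the sets. Let $P_i:=\mathrm{Proj}_{\mathcal{C}_i}(Z)$. Choose $Q_1\in\mathcal{C}_2$ with $\|Q_1-P_1\|\le h$, where $h$ bounds $\mathrm{dist}(P_1,\mathcal{C}_2)$, and similarly choose $Q_2\in\mathcal{C}_1$ near $P_2$. The variational inequalities $\langle P_1-Z,Q_2-P_1\rangle\ge0$ and $\langle P_2-Z,Q_1-P_2\rangle\ge0$ add to give
\[
\|P_1-P_2\|^2\le \langle P_1-Z,Q_2-P_2\rangle+\langle P_2-Z,Q_1-P_1\rangle\le h\big(\|P_1-Z\|+\|P_2-Z\|\big).
\]
This produces only a square-root bound, $\|P_1-P_2\|\lesssim\sqrt{h\,\|Z\|}$, and the constant depends on $\|Z^k\|$. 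That is not linear in the residual, so this step alone falls short.

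The main obstacle is therefore obtaining a \emph{linear} bound uniform in $k$. My plan is to use polyhedral structure rather than the crude variational inequality estimate. The map $(Z,s)\mapsto\mathrm{Proj}_{\{\Pi\ge0,\mathcal{A}\Pi=s\}}(Z)$ is the solution map of a strongly convex QP whose data enter linearly. It is piecewise affine, since each active set gives a linear KKT system, and it is continuous wherever it is defined, so it is globally Lipschitz in $s$ on its domain. The Lipschitz constant is the maximum over finitely many active-set pieces, so it depends only on $\mathcal{A}$ and not on $k$. The domain issue needs a check: the domain is all $(Z,s)$ with $s\in\mathcal{A}(\mathbb{R}_+^{n\times m})$. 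Both $r$ and $r^{k+1}$ lie in this cone. I would connect them along the segment between them, which stays in the cone by convexity, and apply the piecewise-affine Lipschitz property along that segment. This gives
\[
\|\Pi^{k+1}-\Pi^{k,*}\|_F\le\omega\|r^{k+1}-r\|_2.
\]

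Concretely, I would justify the constant $\omega$ via the Hoffman/Robinson upper Lipschitz property of polyhedral multifunctions and the finiteness of active sets. An alternative is to cite the known result that projections onto $\{x\ge0: Bx=s\}$ are Lipschitz in $s$, with a constant given by a Hoffman-type quantity. The second statement of the proposition then follows immediately from \eqref{eq-inexact-ipg-subpro}, since $(1+\|y^{k+1}\|_2)^{-1}\le1$.
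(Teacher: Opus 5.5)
Your argument is correct, but it takes a different route from the paper.

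\textbf{The paper's route.} It proves a uniform Hoffman-type \emph{error bound} for the KKT system of the \emph{fixed} projection problem. It splits $0\le\pi\perp s\ge0$ into finitely many polyhedral regimes and takes the largest Hoffman constant. It then treats $(\pi(y),y,s)$ as a KKT triple that violates only $A\pi=r$.

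\textbf{Your route.} You observe that $\Pi^{k+1}$ is the \emph{exact} projection of $Z^k$ onto the perturbed polytope $\{\Pi\ge0:\mathcal{A}\Pi=\mathcal{A}\Pi^{k+1}\}$. This turns the claim into a \emph{perturbation} bound: the solution of a strongly convex parametric QP depends Lipschitz-continuously on its right-hand side. That fact is classical (Mangasarian--Shiau), and your piecewise-affine argument makes it self-contained. Let $A_{\mathcal I}$ denote the columns of $A$ indexed by $\mathcal I$. On the closed polyhedral set of parameters where the KKT system has a solution supported in $\mathcal I$, the solution equals $z_{\mathcal I}-A_{\mathcal I}^\dagger(A_{\mathcal I}z_{\mathcal I}-s)$ on $\mathcal I$ and $0$ off it. Consistency at piece boundaries is therefore automatic, so no separate continuity proof is needed. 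Your segment argument then gives the explicit constant $\omega=\max_{\mathcal I}\|A_{\mathcal I}^\dagger\|_2$, which is independent of $Z^k$.

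\textbf{What each route buys.} Your route handles large residuals more cleanly. The paper's regime-wise Hoffman step tacitly needs the regime containing $(\pi(y),y,s)$ to contain an exact KKT solution. That can fail: if $y$ is very negative, then $\pi(y)=0$ and $s>0$. The only admissible regime then forces $\pi=0$, which is incompatible with $A\pi=r$, so that regime's Hoffman bound is vacuous. Walking through the pieces along the segment from $\mathcal{A}\Pi^{k+1}$ to $r$ avoids this issue. The paper's formulation is shorter and is phrased directly as a residual-based error bound at fixed data.
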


\begin{theorem}\label{thm-full-convergence}
Let $\{\Pi^k\}$ be generated by Algorithm~\ref{algo-iPG}, and let
$\{\Pi^{k,*}\}$ be defined by \eqref{defHstar}. Then, 
\begin{enumerate}[leftmargin=0.5cm]
\item \textbf{(Approximate sufficient descent property)} For any $k\geq1$, it holds that
        \begin{equation*}
        f(\Pi^{k,*}) - f(\Pi^{k-1,*}) \leq -\frac{\gamma-L_f}{2}\Big(\|\Pi^{k,*} - \Pi^{k-1,*}\|_F^2 + \|\Pi^{k,*} - \Pi^{k}\|_F^2\Big)
        + \gamma\omega^2\varepsilon_{k-1}^2.
        \end{equation*}

\item \textbf{(Shared accumulation points)} The sequence $\{\Pi^{k,*}\}$ is bounded and has the same set of accumulation points as $\{\Pi^k\}$. Consequently, by Theorem~\ref{thm-subsequential-convergence}, any accumulation point of $\{\Pi^{k,*}\}$ is a stationary point of the GWOT problem~\eqref{eq:gw}.

\item \textbf{(Full sequential convergence)} Let $\tau > 1$ be an arbitrary rational constant. Suppose that the tolerance parameter sequence $\{\varepsilon_k\}_{k=0}^{\infty}$ is chosen such that
    \begin{equation}\label{condontolpara}
    \sum_{k=0}^\infty \varepsilon_k < \infty, \quad
    \sum_{k=0}^\infty\left(\sum_{i=k}^\infty\varepsilon_i^2\right)^{1 - \frac{1}{\tau}} < \infty.
    \end{equation}
    Then, the whole sequence $\{\Pi^{k}\}$ converges to a stationary point of the GWOT problem \eqref{eq:gw}.
\end{enumerate}
\end{theorem}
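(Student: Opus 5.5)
The statement to prove is Theorem~\ref{thm-full-convergence}. We take Proposition~\ref{prop:projection-error-bound} and Theorem~\ref{thm-subsequential-convergence} as given.

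\textbf{Part 1.} We compare two exact projected-gradient quantities.
\begin{itemize}[leftmargin=0.5cm]
\item \emph{Optimality of $\Pi^{k,*}$.} This point is the exact minimizer of the $\gamma$-strongly convex model $\langle \nabla f(\Pi^k),\Pi-\Pi^k\rangle+\frac{\gamma}{2}\|\Pi-\Pi^k\|_F^2$ over $\mathcal U(a,b)$. Comparing it with the feasible point $\Pi^{k-1,*}$ and using strong convexity gives
\[
\langle \nabla f(\Pi^k),\Pi^{k,*}-\Pi^k\rangle+\tfrac{\gamma}{2}\|\Pi^{k,*}-\Pi^k\|_F^2 \le \langle \nabla f(\Pi^k),\Pi^{k-1,*}-\Pi^k\rangle+\tfrac{\gamma}{2}\|\Pi^{k-1,*}-\Pi^k\|_F^2-\tfrac{\gamma}{2}\|\Pi^{k,*}-\Pi^{k-1,*}\|_F^2 .
\]
\item \emph{Descent lemma at $\Pi^k$.} Because $f$ is quadratic, $f(\Pi^{k,*})\le f(\Pi^k)+\langle\nabla f(\Pi^k),\Pi^{k,*}-\Pi^k\rangle+\frac{L_f}{2}\|\Pi^{k,*}-\Pi^k\|^2$. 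In the other direction, $f(\Pi^{k-1,*})\ge f(\Pi^k)+\langle\nabla f(\Pi^k),\Pi^{k-1,*}-\Pi^k\rangle-\frac{L_f}{2}\|\Pi^{k-1,*}-\Pi^k\|^2$.
\item \emph{Combining.} Adding these three inequalities yields
\[
f(\Pi^{k,*})-f(\Pi^{k-1,*}) \le -\tfrac{\gamma-L_f}{2}\|\Pi^{k,*}-\Pi^k\|^2-\tfrac{\gamma}{2}\|\Pi^{k,*}-\Pi^{k-1,*}\|^2+\tfrac{\gamma+L_f}{2}\|\Pi^{k-1,*}-\Pi^k\|^2 .
\]
By Proposition~\ref{prop:projection-error-bound}, $\|\Pi^{k}-\Pi^{k-1,*}\|\le\omega\varepsilon_{k-1}$, so the last term is at most $\gamma\omega^2\varepsilon_{k-1}^2$ because $L_f<\gamma$. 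Finally, relax $-\frac{\gamma}{2}$ to $-\frac{\gamma-L_f}{2}$ to obtain the claimed inequality.
\end{itemize}

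\textbf{Part 2.}
\begin{itemize}[leftmargin=0.5cm]
\item \emph{Boundedness.} The set $\Pi^{k,*}\in\mathcal U(a,b)$ is compact, so $\{\Pi^{k,*}\}$ is bounded.
\item \emph{Vanishing gap.} Since $f$ is bounded below on $\mathcal U(a,b)$ and $\sum\varepsilon_k^2<\infty$, telescoping Part 1 gives $\sum_k\|\Pi^{k,*}-\Pi^k\|^2<\infty$. Hence $\|\Pi^{k,*}-\Pi^k\|\to0$.
\item \emph{Shared accumulation points.} It follows that $\{\Pi^{k,*}\}$ and $\{\Pi^k\}$ have the same accumulation points. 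Stationarity of these points then follows from Theorem~\ref{thm-subsequential-convergence}.
\item \emph{Consecutive differences.} The same telescoping gives $\|\Pi^{k,*}-\Pi^{k-1,*}\|\to0$, and $\lim f(\Pi^{k,*})=f^*$ exists.
\end{itemize}

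\textbf{Part 3.} This follows the standard KL argument for inexact descent.
\begin{itemize}[leftmargin=0.5cm]
\item \emph{Merit function.} Define $H_k:=f(\Pi^{k,*})+\gamma\omega^2\sum_{i\ge k}\varepsilon_i^2$. Part 1 makes $H_k$ nonincreasing with
\[
H_{k-1}-H_k\ge\tfrac{\gamma-L_f}{2}\big(\|\Pi^{k,*}-\Pi^{k-1,*}\|^2+\|\Pi^{k,*}-\Pi^k\|^2\big),
\]
and $H_k\to f^*$.
\item \emph{KL inequality.} Consider $F=f+\delta_{\mathcal U(a,b)}$. It is semialgebraic (a quadratic plus a polyhedral indicator), so it satisfies the KL inequality on the compact accumulation set $\Omega$ with a uniform desingularizer $\varphi(s)=cs^{1-\theta}$. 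The theorem allows $\tau$ to be an arbitrary rational constant; we choose it with $1-1/\tau\le1-\theta$, possibly after enlarging the exponent.
\item \emph{Subgradient bound.} Optimality of $\Pi^{k,*}$ gives
\[
-\nabla f(\Pi^k)-\gamma(\Pi^{k,*}-\Pi^k)\in\partial\delta_{\mathcal U}(\Pi^{k,*}),
\]
so $\mathrm{dist}(0,\partial F(\Pi^{k,*}))\le(\gamma+L_f)\|\Pi^{k,*}-\Pi^k\|$.
\end{itemize}

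\textbf{Main obstacle.} The difficulty is that KL is applied to $F$, while the descent holds only for $H_k$, which carries the tail $\delta_k:=\gamma\omega^2\sum_{i\ge k}\varepsilon_i^2$. I would handle this by concavity of $\varphi$ together with the subadditivity $(u+v)^{1-\theta}\le u^{1-\theta}+v^{1-\theta}$. This gives
\[
\varphi(H_k-f^*)\le\varphi(F(\Pi^{k,*})-f^*)+c\,\delta_k^{1-\theta}.
\]
Then the usual estimate (concavity, then AM--GM) yields
\[
2\|\Pi^{k,*}-\Pi^{k-1,*}\|\le\|\Pi^{k-1,*}-\Pi^{k-1}\|+C\big[\varphi(H_{k-1}-f^*)-\varphi(H_k-f^*)\big]+C'\delta_{k-1}^{1-\theta}.
\]
The term $\|\Pi^{k-1,*}-\Pi^{k-1}\|$ must itself be bounded by a sum of consecutive differences plus $\varepsilon$ terms. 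For this I would use $\Pi^{k}\approx\Pi^{k-1,*}$ and the nonexpansiveness of the exact PG map.

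Summing the estimate and using the condition~\eqref{condontolpara}, $\sum_k\delta_k^{1-1/\tau}<\infty$, gives $\sum_k\|\Pi^{k,*}-\Pi^{k-1,*}\|<\infty$. So $\{\Pi^{k,*}\}$ is Cauchy. Since $\|\Pi^{k+1}-\Pi^{k,*}\|\le\omega\varepsilon_k\to0$, the sequence $\{\Pi^k\}$ converges to the same limit, which is stationary by Part 2.

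Two technicalities remain:
\begin{itemize}[leftmargin=0.5cm]
\item the case where $H_k=f^*$ for finitely many $k$, handled by trivial termination;
\item the uniform KL neighborhood, which holds for large $k$ because $\mathrm{dist}(\Pi^{k,*},\Omega)\to0$.
\end{itemize}
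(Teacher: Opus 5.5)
Parts 1 and 2 are correct and follow the paper's argument. The gap is in Part 3, at the step you flag as the main obstacle.

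Your subadditivity device leaves an error term $\delta_k^{1-\theta}$. You make it summable by ``choosing $\tau$ with $1-1/\tau\le 1-\theta$.'' But $\tau$ is not yours to choose: it is the constant for which \eqref{condontolpara} is assumed. Because $\delta_k\to0$, the hypothesis for $\tau$ implies it for every larger $\tau$, but not for smaller ones. Meanwhile, the KL exponent $\theta$ of $F$ can only be enlarged, never reduced. So the requirement $\theta\le 1/\tau$ is out of reach in general. For instance, suppose $\theta=1/2$. Then $\varepsilon_k=(k+1)^{-1.2}$ with $\tau=10$ satisfies \eqref{condontolpara}, since $\delta_k\asymp k^{-1.4}$. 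Yet $\sum_k\delta_k^{1/2}=\infty$, and this is exactly the regime advertised in the remark after the theorem.

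There is a second hole. At indices with $F(\Pi^{k,*})\le f^*<H_k$, which nothing rules out, the KL inequality for $F$ gives no information and $\varphi(F(\Pi^{k,*})-f^*)$ is undefined. This is not the ``$H_k=f^*$'' case you dismiss. A minor point: the exact PG map is not nonexpansive for nonconvex $f$. You do not need it, since $\|\Pi^{k,*}-\Pi^k\|_F\le\|\Pi^{k,*}-\Pi^{k-1,*}\|_F+\omega\varepsilon_{k-1}$ by the triangle inequality and Proposition~\ref{prop:projection-error-bound}.

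The missing idea in the paper is to build the tail into the KL function itself. Set $\sigma_k:=\tau\gamma\omega^2\sum_{i\ge k}\varepsilon_i^2$ and $\Psi_\tau(\Pi,t):=f(\Pi)+\delta_{\mathcal U(a,b)}(\Pi)+t^\tau/\tau$. Then:
\begin{itemize}
\item $H_k=\Psi_\tau(\Pi^{k,*},\sigma_k^{1/\tau})$ exactly, so KL is applied to the very quantity that descends.
\item $\Psi_\tau$ is semialgebraic (this is why $\tau$ must be rational), hence KL on the compact accumulation set $\{(\Pi^*,0)\}$.
\item $(W^{k,*},\sigma_k^{1-1/\tau})\in\partial\Psi_\tau(\Pi^{k,*},\sigma_k^{1/\tau})$, with $\|W^{k,*}\|_F\le 2\gamma(\|\Pi^{k,*}-\Pi^{k-1,*}\|_F+\omega\varepsilon_{k-1})$.
\end{itemize}
The usual concavity/AM--GM summation then needs exactly $\sum_k\varepsilon_k<\infty$ and $\sum_k\sigma_k^{1-1/\tau}<\infty$. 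This requires no knowledge of any KL exponent.

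If you prefer to stay with $F$, bound $1/\varphi'(H_k-f^*)$ rather than $\varphi$-values. With $\varphi(s)=cs^{1-\theta}$, one has $(H_k-f^*)^\theta\le(F(\Pi^{k,*})-f^*)_+^\theta+\delta_k^\theta$. This also covers the case $F(\Pi^{k,*})\le f^*$, and it produces the error $\delta_k^{\theta}$ instead of $\delta_k^{1-\theta}$. Since $\theta$ may be enlarged near the limit set, you can then take $\theta\ge 1-1/\tau$ and conclude from \eqref{condontolpara}.
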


\begin{remark}[A simple choice of $\{\varepsilon_k\}$]
Let $\tau>1$. To ensure \eqref{condontolpara}, one can set
$\varepsilon_k=(k+1)^{-\alpha}$ with $\alpha>1$ and $(2\alpha-1)\left(1-\frac{1}{\tau}\right)>1$. Indeed, $\sum_{k=0}^\infty\varepsilon_k<\infty$ follows from $\alpha>1$, and $\sum_{i=k}^\infty\varepsilon_i^2
\leq \int_k^\infty (t+1)^{-2\alpha}\,dt
= \frac{(k+1)^{-(2\alpha-1)}}{2\alpha-1}.$
Thus,
$\sum_{k=0}^\infty(\sum_{i=k}^\infty\varepsilon_i^2)^{1-1/\tau}<\infty$
whenever $(2\alpha-1)(1-1/\tau)>1$. Equivalently,
$\alpha>\frac{1}{2}+\frac{\tau}{2(\tau-1)}$, whose right-hand side decreases to
$1$ as $\tau\to\infty$. Hence, by taking $\tau$ sufficiently large, one may choose a decay exponent $\alpha$ only slightly larger than $1$ in practice, as in our numerical experiments.
\end{remark}


\section{Numerical Experiments on Graph Alignment via GWOT}\label{sec:numerical-graph-alignment}

In this section, we evaluate the proposed iPG on a graph alignment task and compare it with several representative GWOT solvers. The goal is to assess how well different algorithms balance objective quality, computational efficiency, feasibility, and sparsity, and alignment accuracy.

\paragraph{Graph alignment problem and GWOT formulation.} Graph alignment \cite{XuLuoZhaCarin2019GWL} aims to identify correspondences between the node sets of two graphs that represent the same or similar underlying relational structure. Let $G_1 = (V_1,E_1), \; G_2 = (V_2,E_2)$ be two undirected graphs with $|V_1| = n_1$ and $|V_2| = n_2$. Here, we assume that the two graphs are from a common latent graph up to a hidden permutation, possibly with additional structural perturbations. The goal is to recover the unknown node matching between $V_1$ and $V_2$.

A key difficulty in graph alignment is that node labels are arbitrary, so direct comparison of adjacency matrices is not meaningful. Instead, one compares the \emph{intrinsic geometry} of the two graphs. To this end, for each graph we construct an intra-graph cost matrix $C_1 \in \mathbb{R}^{n_1\times n_1}, \; C_2 \in \mathbb{R}^{n_2\times n_2}$, where $(C_1)_{ii'}$ and $(C_2)_{jj'}$ encode pairwise shortest-path distances between nodes within each graph.\footnote{Without loss of generality, we assume that both graphs are connected.} These matrices summarize the internal metric structure of the graphs and are invariant to relabeling.  Given node mass distributions $p, q$, the graph alignment problem can be relaxed into a GWOT problem. In our experiments, we use the standard squared loss and set $p = \tfrac{1}{n}1_{n},\; q = \tfrac{1}{n}1_{n}$. {The transport matrix $\Pi$} obtained from solving the GWOT then plays the role of a soft matching between the node sets of the two graphs: a large value of $\Pi_{ij}$ indicates that node $i\in V_1$ is strongly associated with node $j\in V_2$ with the hope that a good coupling should map pairs of nodes in $G_1$ to pairs of nodes in $G_2$ with similar relational distances. Therefore, even when node identities are completely scrambled, the GWOT objective may recover alignment information purely from structural consistency.


\paragraph{Graph generation.} We generate graph alignment instances as follows. First, we sample a connected Erd\H{o}s--R\'enyi graph\footnote{Using the function $\texttt{networkx.erdos\_renyi\_graph}$ at: \url{https://networkx.org/documentation/stable/reference/generated/networkx.generators.random_graphs.erdos_renyi_graph.html}} $G_1 \sim \mathcal{G}(n, p_{\mathrm{edge}})$ with $n$ nodes and edge probability $p_{\mathrm{edge}}=0.2$. If a sampled graph is disconnected, we resample until a connected graph is obtained. Let $A_1$ denote the adjacency matrix of $G_1$. Next, we generate a random hidden permutation $\pi$ over $\{1,\dots,n\}$ and construct a permuted copy of $G_1$ by $A_2^{\mathrm{clean}} = P_\pi^\top A_1 P_\pi,$ where $P_\pi$ is the permutation matrix associated with $\pi$. This produces a graph that is isomorphic to $G_1$ but with relabeled nodes. To make the alignment problem nontrivial, we then inject structural noise into the second graph by independently flipping each off-diagonal edge with probability $\eta = 0.1$. More precisely, for each unordered pair $(i,j)$ with $i<j$, the edge indicator is replaced by its complement with probability $\eta$, while symmetry and a zero diagonal are preserved. If the resulting noisy graph is disconnected, we regenerate the noise until a connected graph is obtained. The final graph is denoted by $G_2$. For each graph, we compute the all-pairs shortest-path distance matrix and normalize it by its maximum entry $C_1 \leftarrow \tfrac{C_1}{\max(C_1)}, \;
C_2 \leftarrow \tfrac{C_2}{\max(C_2)}.$ These normalized distance matrices are used as inputs to the GWOT solvers. Since $G_2$ is generated from a permuted version of $G_1$ before noise is added, the hidden permutation $\pi$ provides the ground-truth node correspondence. This enables us to evaluate the quality of the recovered alignment quantitatively. 

\paragraph{Baseline methods.}
{We consider 20 independently generated instances using the 20 random seeds for each problem size. 
All reported statistics are averaged over these 20 runs. We compare the proposed iPG with the following GWOT solvers implemented in the open-source POT package\footnote{\url{https://pythonot.github.io/}}. 
\vspace{-1mm}
\begin{itemize}[leftmargin=0.5cm]
    \item \textbf{CG}: the standard conditional-gradient GW solver. \footnote{Implemented in \texttt{ot.gromov.gromov\_wasserstein}.}
    \vspace{-0.5mm}
    \item \textbf{PPA}: the entropic GW solver with the proximal point algorithm backend. \footnote{Implemented in \texttt{ot.gromov.entropic\_gromov\_wasserstein} with \texttt{solver="PPA"}.}
    \vspace{-0.5mm}
    \item \textbf{EPGD}: the entropic GW solver with projected gradient descent backend. \footnote{Implemented in \texttt{ot.gromov.entropic\_gromov\_wasserstein} with \texttt{solver="PGD"}.}
    \vspace{-0.5mm}
    \item \textbf{BAPG}: the Bregman alternating projected gradient method. \footnote{Implemented \texttt{ot.gromov.BAPG\_gromov\_wasserstein}.}
    \vspace{-0.5mm}
    \item \textbf{iPG}: the proposed inexact projected gradient method.
\end{itemize}

All methods are applied to the same normalized shortest-path matrices $(C_1,C_2)$ and uniform marginals $(p,q)$ with the same initialization $\Pi^0 = pq^T$. For each returned coupling $\Pi$, we record the final GW loss reported by the solver. This measures the quality of the structural alignment in terms of the optimization objective. Lower values indicate better agreement between the relational geometries of the two graphs. 
To evaluate how well the computed coupling satisfies the transport constraints, we use the residual {$\|\Pi\mathbf{1}-p\|_2 + \|\Pi^\top \mathbf{1}-q\|_2$}. Smaller values indicate better feasibility with respect to the prescribed marginals. Since exact optimal transport solutions are often sparse, it is informative to examine the sparsity pattern of the returned coupling. {We measure sparsity as the fraction of zero entries $\tfrac{\#\{(i,j)\,:\,\Pi_{ij}=0\}}{n^2}$.}
A higher sparsity score indicates a more concentrated transport plan. To evaluate alignment quality relative to the ground truth, we convert the soft coupling $\Pi$ into a hard permutation-like assignment $P_{\mathrm{pred}}$  using the Hungarian algorithm \footnote{Implemented in $\texttt{scipy.optimize.linear\_sum\_assignment}$ at: \url{https://docs.scipy.org/doc/scipy/reference/generated/scipy.optimize.linear_sum_assignment.html}}, and then compare it with {the true permutation matrix $P_{\mathrm{true}}$ via $\tfrac{1}{n}\langle P_{\mathrm{true}}, P_{\mathrm{pred}} \rangle$. } 

\paragraph{Solver hyperparameters.}
For the CG solver, we set the maximum number of outer iterations to $\texttt{max\_iter} = 5000$. For the entropic solvers PPA, EPGD, and {BAPG}, we test four entropic regularization levels $\epsilon \in \{10^{-3},\,10^{-2},\,10^{-1},\,10^{0}\}$. Each solver is run with a maximum of $\texttt{max\_iter} = 5000$ iterations. This allows us to examine the effect of entropy strength on objective value, feasibility, sparsity, and runtime. For the proposed iPG method, we terminate the algorithm when the maximum iterations $\texttt{max\_iter} = 5000$ or the relative successive change is less than $\texttt{tol} = 10^{-9}$. The projection subproblems are solved by the accelerated gradient method \cite{beck2009fast} with $\varepsilon_k := (1+k)^{-3}$, $k\ge 0$. Runs that terminate with an exception are marked as failures and excluded from the statistical averages, while the number of successful and failed runs is tracked separately.

\begin{figure}[ht]
    \centering

    \begin{minipage}[t]{0.48\textwidth}
        \centering
        \includegraphics[width=\linewidth]{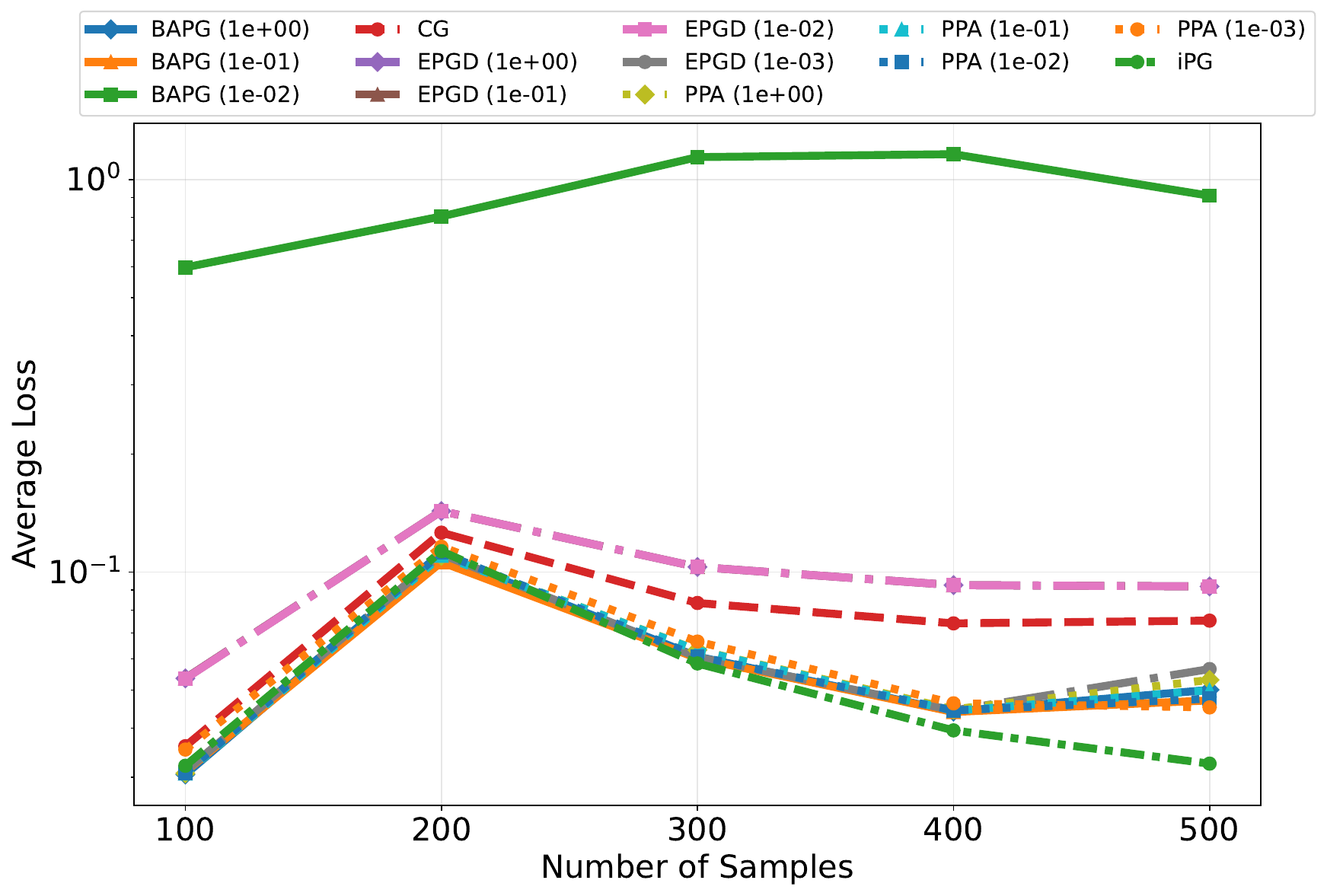}

        \small (a) Loss
    \end{minipage}\hfill
    \begin{minipage}[t]{0.48\textwidth}
        \centering
        \includegraphics[width=\linewidth]{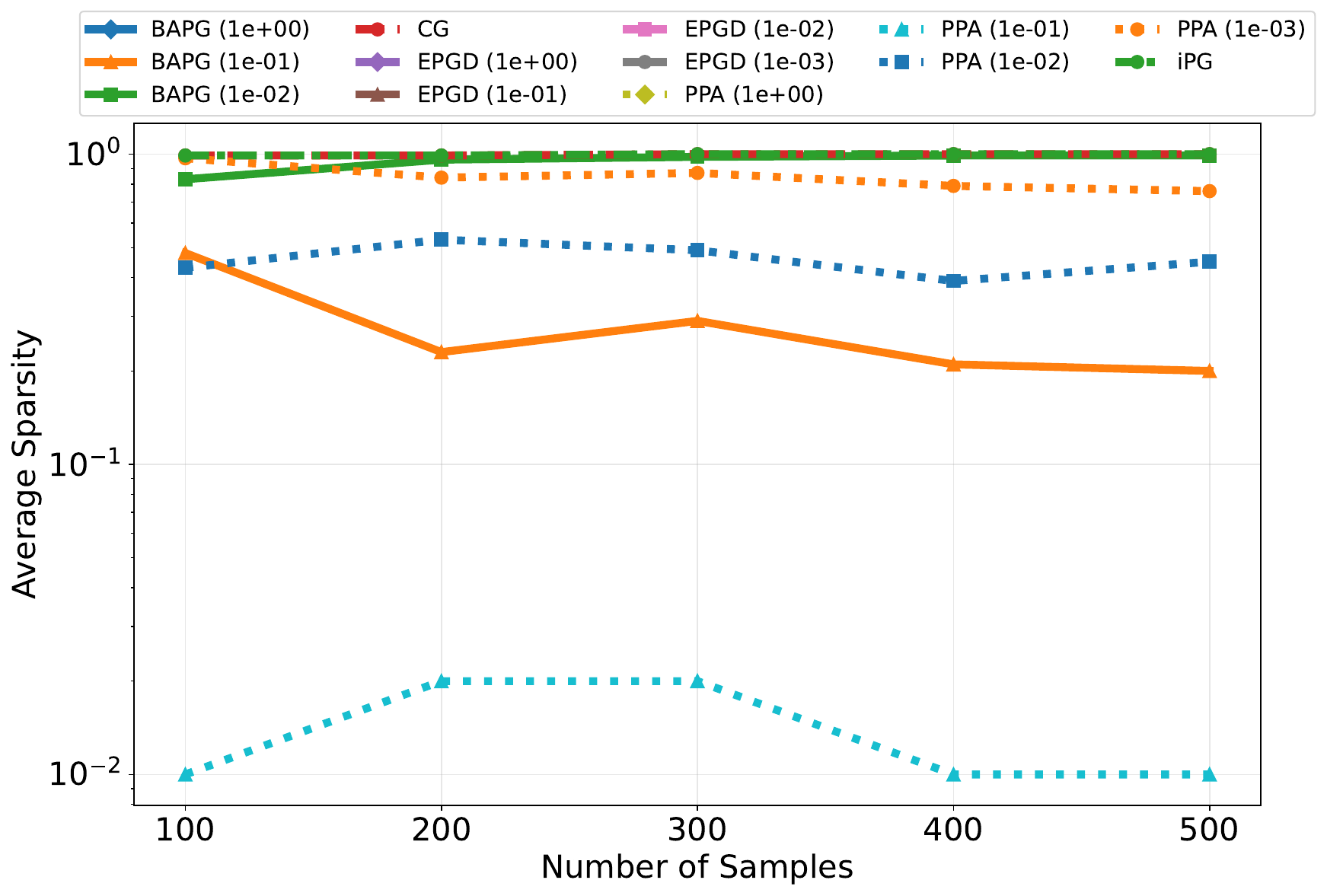}

        \small (b) Sparsity
    \end{minipage}

    \medskip

    \begin{minipage}[t]{0.48\textwidth}
        \centering
        \includegraphics[width=\linewidth]{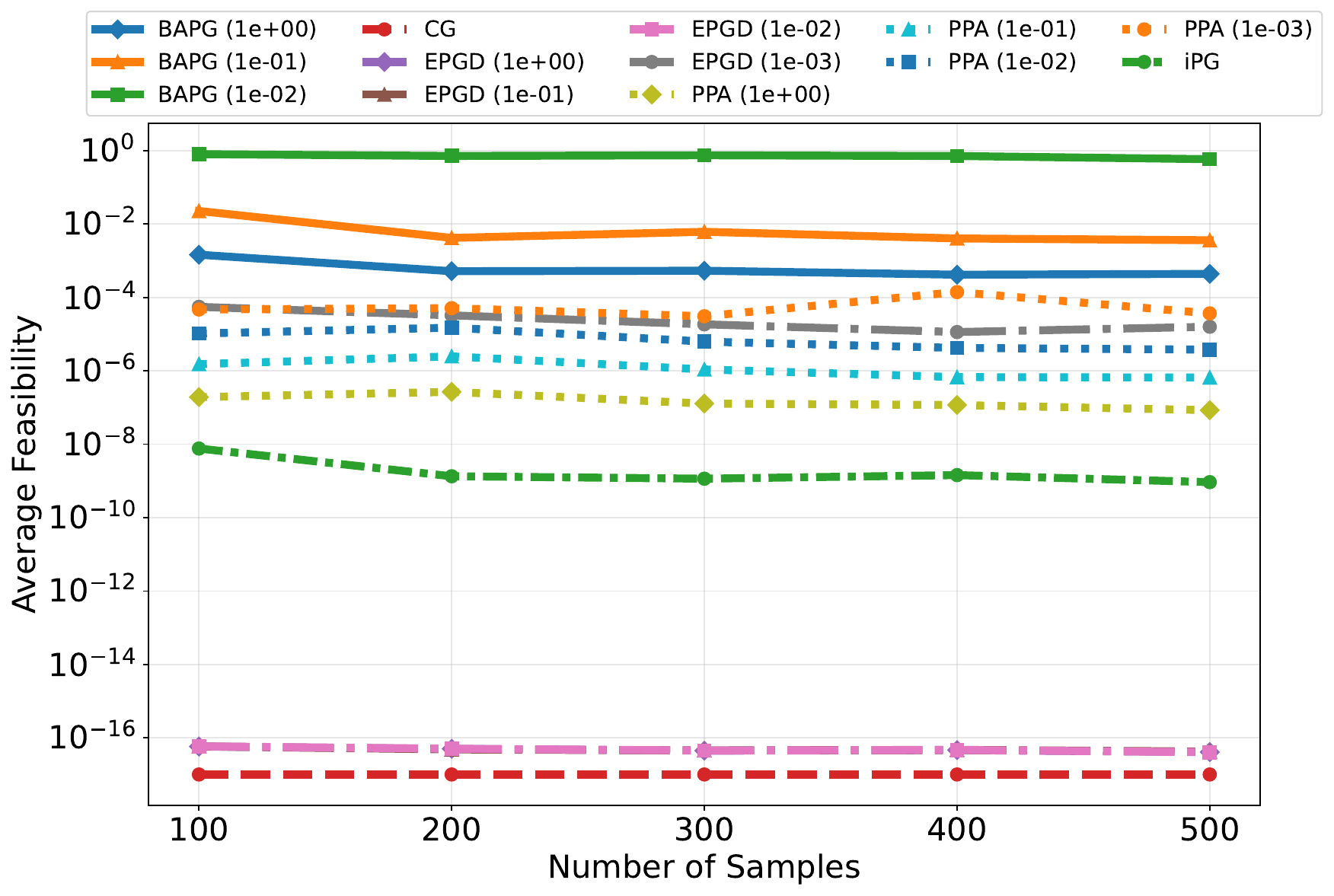}

        \small (c) Feasibility
    \end{minipage}\hfill
    \begin{minipage}[t]{0.48\textwidth}
        \centering
        \includegraphics[width=\linewidth]{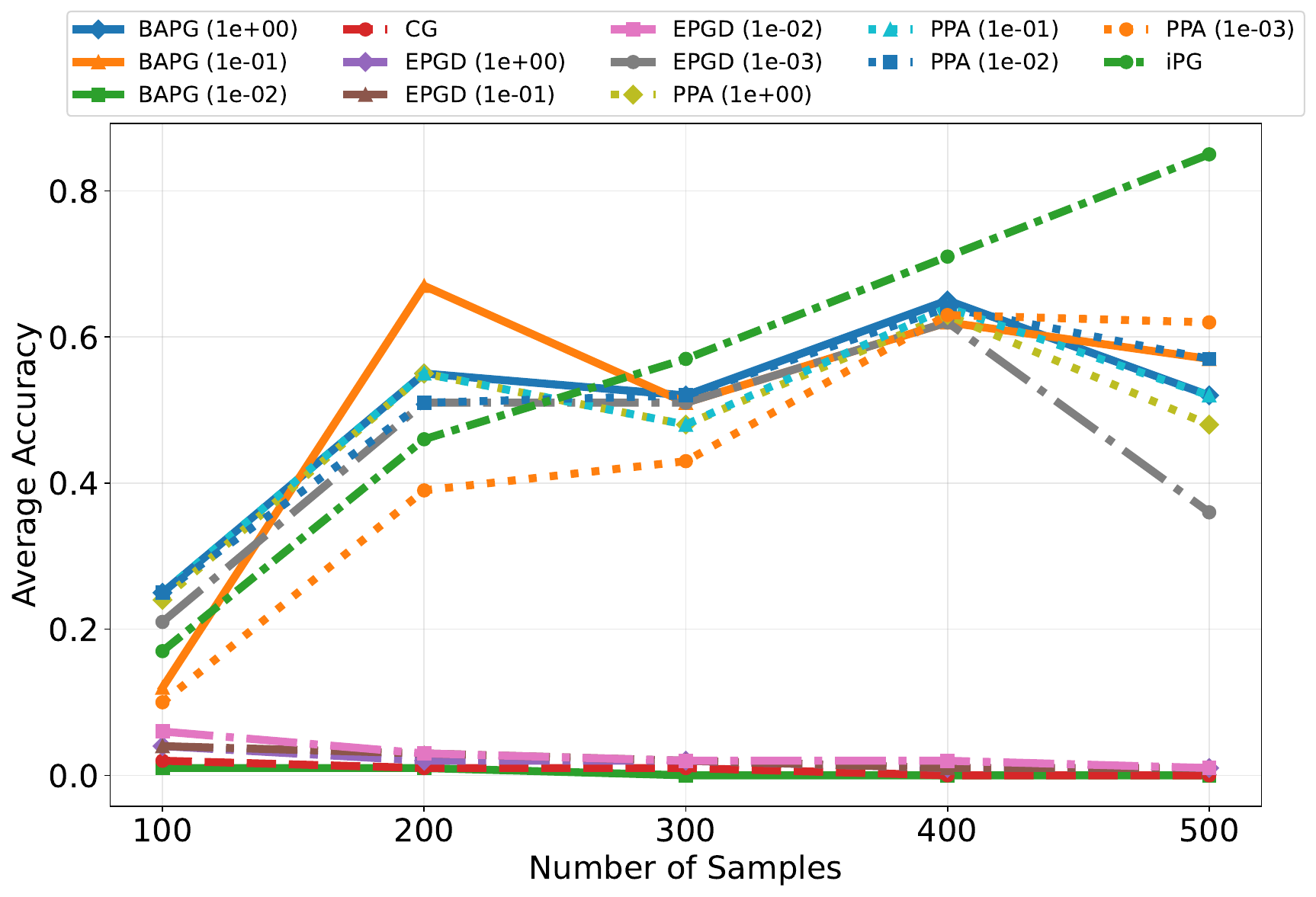}

        \small (d) Accuracy
    \end{minipage}

    \medskip

    \begin{minipage}[t]{0.48\textwidth}
        \centering
        \includegraphics[width=\linewidth]{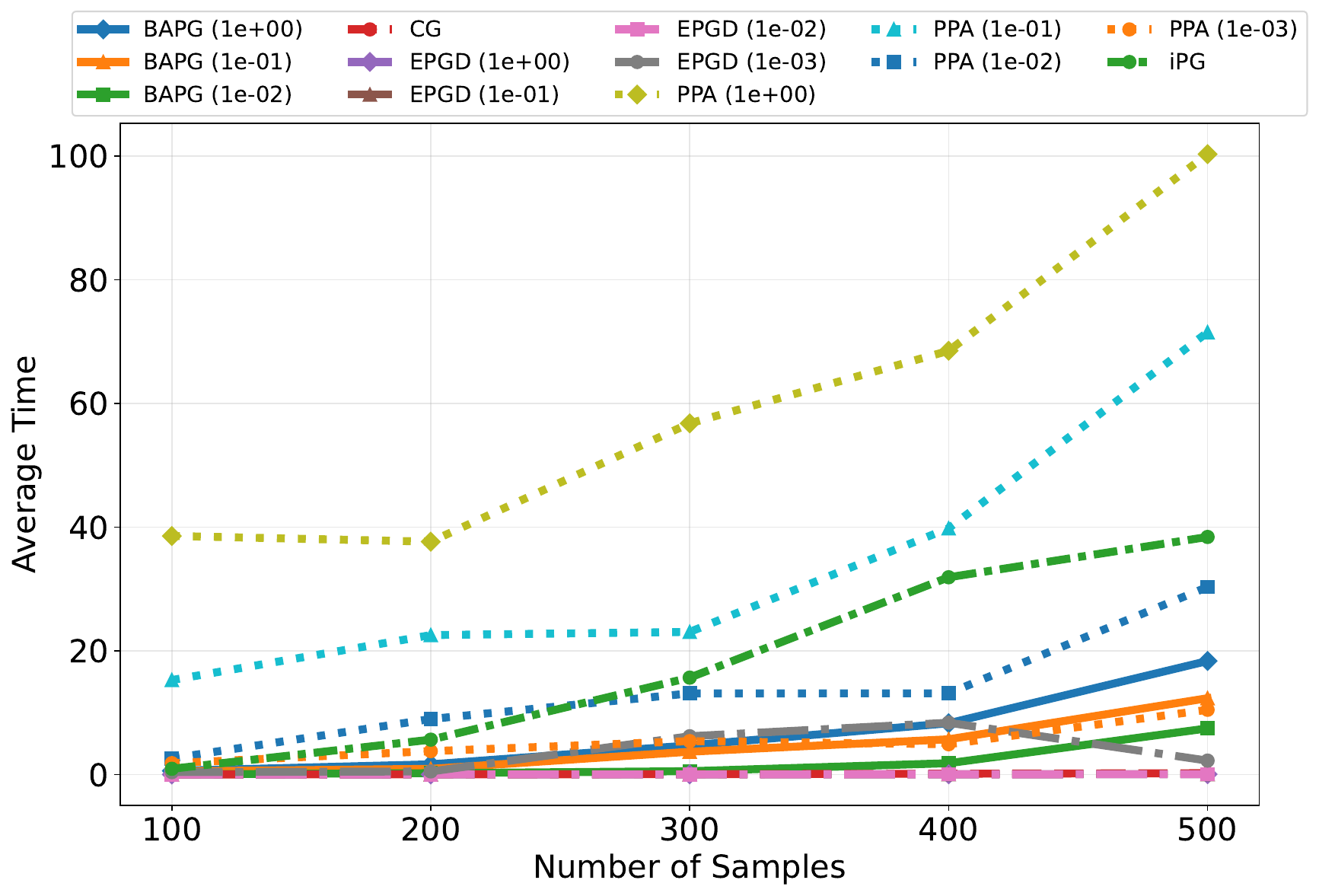}

        \small (e) Time
    \end{minipage}

    \caption{Computation results with $n\in \{100,200,300,400,500\}$.}
    \label{fig:metrics}
\end{figure}

\paragraph{Computational results.} The computational results are summarized in Figure~\ref{fig:metrics}; detailed numerical records can be found in Appendix~\ref{appendix-detailed-results}. Overall, the proposed iPG method delivers the strongest balance among GW objective value, sparsity, feasibility, alignment accuracy, and runtime across the tested instances. In particular, iPG consistently attains one of the smallest losses among all methods while maintaining extremely small feasibility residuals. At the same time, {it returns couplings with sparsity essentially equal to one}, which is much closer to the expected structure of an exact transport solution than the couplings produced by the entropic baselines. This structural advantage is important in graph alignment, since a sparse and nearly feasible coupling is substantially easier to interpret and to round into a high-quality discrete correspondence. Consistent with this, iPG achieves the best overall alignment accuracy and remains competitive in runtime as the problem size increases.

The comparisons also clarify the strengths and weaknesses of the competing approaches. The classical CG method always produces very sparse and exactly feasible couplings, but its loss and matching accuracy are generally inferior to those of other methods, suggesting that feasibility and sparsity alone do not guarantee a good alignment if the method converges to a poorer stationary point. EPGD is computationally cheap and achieves very small feasibility residuals for moderate and large values of the entropic parameter, but its iterates remain essentially dense across all problem sizes, which is undesirable for recovering a discrete matching. More broadly, the entropic PG-type approach does not preserve the sparse structure that one expects from the unregularized GWOT solution, and feasibility can also become problematic when the regularization is small. PPA is more competitive in terms of loss and accuracy, and for some values of $\varepsilon$, it performs similarly to iPG; however, it is clearly sensitive to the choice of $\varepsilon$, returns noticeably denser couplings than iPG, and becomes substantially slower as $\varepsilon$ increases. BAPG is the least robust among the tested methods: it fails when $\varepsilon$ is small, and even when it succeeds, its feasibility and alignment accuracy are often unsatisfactory unless the regularization is tuned carefully.

Taken together, these results support the central message of the paper. A non-entropic projected-gradient framework with practical inexact projections can preserve the expected structural features of unregularized GWOT solutions, in particular near-feasibility and high sparsity, while remaining computationally efficient. In contrast, entropic methods often introduce a smoothing bias that yields denser couplings and may weaken downstream matching performance, whereas exact-feasibility methods such as CG can converge to less favorable stationary points in terms of objective value and alignment accuracy. Overall, the proposed iPG method offers a robust, scalable, and structurally faithful approach to graph alignment via GWOT.


\section{Conclusions}\label{sec-conclusion}

In this paper, we developed a provably convergent inexact projected gradient framework for solving the GWOT problem, with a particular emphasis on \emph{practical implementability with convergence guarantees}. Our main contribution is a truly verifiable inexact condition for the approximate projection step onto the transport polytope, which avoids unknown quantities, admits a simple feasibility-residual-based stopping rule, and helps bridge the gap between scalable implementations and rigorous convergence theory. Under this implementable condition, we established subsequential convergence to stationary points and, under an additional mild tolerance-decay condition, convergence of the whole sequence. Numerical experiments on graph alignment further show that the proposed iPG method achieves a favorable balance among objective quality, sparsity, feasibility, alignment accuracy, and computational efficiency, compared with several representative GWOT solvers.

Looking ahead, several limitations of the current work merit further investigation. On the application side, it will be important to deploy the proposed framework in more real-world GWOT tasks, such as network alignment, shape analysis, biological data integration, and cross-domain representation matching. On the computational side, an important next step is to develop high-performance implementations that better exploit modern hardware architectures, including multi-core CPUs, GPUs, and distributed systems, so as to further enhance scalability for large-scale problems. It is also of strong interest to design stochastic and mini-batch variants of the proposed method possibly with variance reduction techniques, which may provide additional efficiency gains in massive data settings and open the door to online or streaming GWOT computations.


\bibliographystyle{plain}
\bibliography{Ref_GW.bib}

\newpage
\appendix

\section{Proof of Theorem \ref{thm-subsequential-convergence}}\label{appendix-proof-sub-conv}

In this section, we provide a detailed proof for the subsequential convergence of Algorithm \ref{algo-iPG}.

\begin{lemma}[Explicit error bound for $\mathcal{U}(a,b)$ when $\Pi\ge 0$]\label{lem:explicit-hoffman-Uab}
Let $a\in\mathbb{R}^n_+$ and $b\in\mathbb{R}^m_+$ satisfy $1^\top_n a = 1^\top_m b$. For any $\Pi\in\mathbb{R}^{n\times m}_+$, there exists $\widetilde{\Pi}\in\mathcal{U}(a,b)$ such that
\begin{equation*}
\|\Pi - \widetilde{\Pi}\|_F
\leq \|\Pi-\widetilde{\Pi}\|_{1}
\leq 2\big(\|a-\Pi1_m\|_1+\|b-\Pi^\top1_n\|_1\big)
\leq 2\sqrt{n+m}\,\left\|\mathcal{A}\Pi-r\right\|_2.
\end{equation*}
\end{lemma}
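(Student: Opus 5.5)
This is the classical rounding argument of Altschuler--Weed--Rigollet: first shrink rows and columns so the marginals are dominated by $a$ and $b$, then add back the missing mass as a rank-one product. The $\ell_1$ cost of each move is controlled by the marginal errors. The outer two inequalities in the statement are elementary: $\|X\|_F\le\|X\|_1$ for any matrix, and Cauchy--Schwarz gives $\|u\|_1+\|v\|_1\le\sqrt{n}\|u\|_2+\sqrt{m}\|v\|_2\le\sqrt{n+m}\,\|(u,v)\|_2$. So the work is in the middle inequality.

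\textbf{Step 1 (row reduction).} For each row $i$ with $(\Pi1_m)_i>a_i$, scale that row by $a_i/(\Pi1_m)_i$; leave the other rows unchanged. Call the result $\Pi'\ge0$. The $\ell_1$ change equals $\sum_i\big((\Pi1_m)_i-a_i\big)_+\le\|a-\Pi1_m\|_1$, and $\Pi'1_m\le a$ entrywise.

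\textbf{Step 2 (column reduction).} Apply the same scaling to the columns of $\Pi'$ whose sums exceed $b_j$, giving $\Pi''\ge0$ with $\Pi''^\top1_n\le b$, and still $\Pi''1_m\le a$ since scaling down only decreases row sums. The cost is $\sum_j\big((\Pi'^\top1_n)_j-b_j\big)_+$. Because $\Pi'\le\Pi$ entrywise, $\Pi'^\top1_n\le\Pi^\top1_n$, so this cost is at most $\|b-\Pi^\top1_n\|_1$. This monotonicity is the key point that keeps the constants clean.

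\textbf{Step 3 (mass completion).} Let $e_a:=a-\Pi''1_m\ge0$ and $e_b:=b-\Pi''^\top1_n\ge0$. Both have the same total $\delta:=1^\top e_a=1^\top e_b=1^\top a-1^\top\Pi''1_m$, using $1^\top a=1^\top b$. If $\delta=0$, set $\widetilde\Pi:=\Pi''$. Otherwise set $\widetilde\Pi:=\Pi''+e_ae_b^\top/\delta$, which is nonnegative and has marginals exactly $a$ and $b$, so $\widetilde\Pi\in\mathcal U(a,b)$. Its $\ell_1$ cost is $\delta$.

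\textbf{Bounding $\delta$ (main obstacle).} The difficulty is bounding $\delta$ by the original residuals despite the two reduction steps. Write $\delta=1^\top e_a=\|a-\Pi''1_m\|_1$. By the triangle inequality,
\[
\|a-\Pi''1_m\|_1\le\|a-\Pi1_m\|_1+\|(\Pi-\Pi'')1_m\|_1 .
\]
Since $\Pi-\Pi''\ge0$, the last term equals $\|\Pi-\Pi''\|_1$, which by Steps 1–2 is at most $\|a-\Pi1_m\|_1+\|b-\Pi^\top1_n\|_1$.

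\textbf{Total.} Summing the three contributions gives
\[
\|\Pi-\widetilde\Pi\|_1\le\|\Pi-\Pi''\|_1+\delta\le 2\|a-\Pi1_m\|_1+2\|b-\Pi^\top1_n\|_1 ,
\]
which is the middle inequality with constant $2$. Combined with the elementary norm inequalities above, this proves the lemma.
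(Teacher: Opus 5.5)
Your construction (row scaling, column scaling, rank-one completion) is the rounding procedure of Altschuler--Weed--Rigollet, whose Lemma 7 is what the paper cites for the middle inequality. Steps 1--3 and the two outer inequalities are correct. The gap is in the final accounting: the bounds you derive do not give the constant $2$.

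You have $\|\Pi-\Pi''\|_1\le\|a-\Pi1_m\|_1+\|b-\Pi^\top1_n\|_1$ and, from the triangle inequality, $\delta\le 2\|a-\Pi1_m\|_1+\|b-\Pi^\top1_n\|_1$. Their sum is $3\|a-\Pi1_m\|_1+2\|b-\Pi^\top1_n\|_1$, not $2\|a-\Pi1_m\|_1+2\|b-\Pi^\top1_n\|_1$. As written, you only prove the lemma with constant $3$ in the middle inequality (and a correspondingly larger factor than $2\sqrt{n+m}$ at the end). Those explicit constants feed into the shadow-sequence bound and $\xi_k$ in Theorem~\ref{thm-subsequential-convergence}.

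The slack comes from two places. In Steps 1--2 you replace positive parts by absolute values, and you bound $\delta$ by the triangle inequality instead of computing it. Write $P_a:=\sum_i((\Pi1_m)_i-a_i)_+$, $N_a:=\sum_i(a_i-(\Pi1_m)_i)_+$ and $P_b:=\sum_j((\Pi^\top1_n)_j-b_j)_+$. Steps 1--2 actually give $\|\Pi-\Pi''\|_1\le P_a+P_b$. Since $0\le\Pi''\le\Pi$ entrywise, mass conservation gives the exact identity
\[
\delta=1_n^\top a-\|\Pi''\|_1=\big(1_n^\top a-\|\Pi\|_1\big)+\|\Pi-\Pi''\|_1=(N_a-P_a)+\|\Pi-\Pi''\|_1 .
\]
Hence
\[
\|\Pi-\widetilde\Pi\|_1\le 2\|\Pi-\Pi''\|_1+N_a-P_a\le P_a+N_a+2P_b\le\|a-\Pi1_m\|_1+2\|b-\Pi^\top1_n\|_1 ,
\]
which is at most $2\big(\|a-\Pi1_m\|_1+\|b-\Pi^\top1_n\|_1\big)$. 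With this replacement your argument is complete and matches the proof of the cited Lemma 7.
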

\begin{proof}
The proof is straightforward with the second inequality following from \cite[Lemma 7]{altschuler2017near}. 
\end{proof}

Note that the point $\widetilde{\Pi}$ constructed in Lemma~\ref{lem:explicit-hoffman-Uab} need not coincide with the Euclidean projection of $\Pi$ onto the feasible set $\mathcal{U}(a,b)$.

\begin{proof}[Proof of Theorem \ref{thm-subsequential-convergence}]
The existence of the sequence $\{\widetilde{\Pi}^k\}$ follows from
Lemma~\ref{lem:explicit-hoffman-Uab}.

\textbf{Statement 1.}
The boundedness of $\{\widetilde{\Pi}^k\}$ follows from the boundedness
of the feasible set $\mathcal{U}(a,b)$. In particular,
$\|\widetilde{\Pi}^k\|_F\leq \|\widetilde{\Pi}^k\|_{1}=1$ for all
$k\geq0$. Moreover, by the construction of $\{\widetilde{\Pi}^k\}$,
the definition of $\varepsilon_{-1}$, and the inexact condition
\eqref{eq-inexact-ipg-subpro}, we have that
\begin{equation*}
\|\Pi^k\|_F
\leq \|\Pi^k-\widetilde{\Pi}^k\|_F+\|\widetilde{\Pi}^k\|_F
\leq 2\sqrt{m+n}\,\varepsilon_{k-1} + 1
\leq 2E\sqrt{m+n} + 1,
\quad k\geq0,
\end{equation*}
where $\varepsilon_{-1}:=\|\mathcal{A}\Pi^0-r\|_2$ and
$E:=\sum_{k=-1}^{\infty}\varepsilon_k<\infty$. Thus, $\{\Pi^k\}$ is also bounded.

\textbf{Statement 2.} Define
\begin{equation*}
\Gamma^{k+1} := \gamma \left( \Pi^{k+1}  - \left(\mathcal{A}^*y^{k+1} + \Pi^k - \frac{1}{\gamma}\nabla f(\Pi^k)\right)\right), \quad k\geq 0.
\end{equation*}
It is easy to verify that $\Gamma^{k+1}\geq0$, $\langle \Pi^{k+1},\Gamma^{k+1}\rangle=0$ (by the construction of $H^k$ in \eqref{defHk}), and
\begin{equation*}
\nabla f(\Pi^k) + \gamma (\Pi^{k+1} - \Pi^k) - \gamma\mathcal{A}^*y^{k+1} - \Gamma^{k+1} = 0.
\end{equation*}
Moreover, by Statement 1, $\|\mathcal{A}\|_2\leq\sqrt{m+n}$, and $f(\Pi):= -\langle C_{\mathcal{X}}\Pi C_{\mathcal{Y}},\Pi\rangle$ with $L_f<\gamma$, we have
\begin{equation*}
\begin{aligned}
\|\Gamma^{k+1}\|_F
\leq &\; \gamma\|\mathcal{A}^*y^{k+1}\|_F + \gamma\|\Pi^{k+1} - \Pi^k\|_F
+ \|\nabla f(\Pi^k)\|_F \\
\leq &\; \gamma \|\mathcal{A}\|_2 \|y^{k+1}\|_2 + \gamma\|\Pi^{k+1}\|_F
+ (\gamma + L_f)\|\Pi^k\|_F  \\
\leq &\; \gamma \sqrt{m+n} \|y^{k+1}\|_2 + 3\gamma\big(2E\sqrt{m+n} + 1\big).
\end{aligned}
\end{equation*}
Then, for any $\Pi\in \mathcal{U}(a,b)$, we see that
\begin{equation*}
\begin{aligned}
&-\langle\nabla f(\Pi^k) + \gamma (\Pi^{k+1} - \Pi^k), \,\Pi - \widetilde{\Pi}^{k+1}\rangle
= \langle -\gamma\mathcal{A}^*y^{k+1} - \Gamma^{k+1}, \,\Pi - \widetilde{\Pi}^{k+1}\rangle  \\
= &\; \gamma\langle y^{k+1}, \,\mathcal{A}\widetilde{\Pi}^{k+1} \!-\! \mathcal{A}\Pi \rangle
+ \langle \Gamma^{k+1}, \,\widetilde{\Pi}^{k+1} \!-\! \Pi\rangle
= \langle \Gamma^{k+1}, \,\widetilde{\Pi}^{k+1} \!-\! \Pi^{k+1}\rangle + \langle \Gamma^{k+1},\Pi^{k+1}\rangle  -  \langle  \Gamma^{k+1}, \Pi\rangle \\
\leq &\; \|\Gamma^{k+1}\|_F\|\widetilde{\Pi}^{k+1}-\Pi^{k+1}\|_F
\leq 2\sqrt{m+n}\,\|\Gamma^{k+1}\|_F\|\mathcal{A}\Pi^{k+1}-r\|_2  \\
\leq &\; 2\sqrt{m+n}\Big(\gamma \sqrt{m+n} \|y^{k+1}\|_2 + 3\gamma\big(2E\sqrt{m+n} + 1\big)\Big)\frac{\varepsilon_k}{1 + \|y^{k+1}\|_2} \\
\leq &\; (6E+4)2\gamma(m+n)\varepsilon_k =: \widehat{\varepsilon}_k,
\end{aligned}
\end{equation*}
where the third equality follows from $\mathcal{A}\widetilde{\Pi}^{k+1}=\mathcal{A}\Pi=r$, the first inequality follows from $\langle \Gamma^{k+1},\Pi^{k+1}\rangle=0$ and $\langle  \Gamma^{k+1}, \Pi\rangle\geq0$, the second inequality follows from the construction of $\{\widetilde{\Pi}^k\}$, and the third inequality follows from the inexact condition \eqref{eq-inexact-ipg-subpro}. This implies that $D^{k+1}:= -\nabla f(\Pi^k) - \gamma(\Pi^{k+1} - \Pi^k)$ is an $\widehat{\varepsilon}_k$-subdifferential of $\delta_{\mathcal{U}(a,b)}$ at the point $\widetilde{\Pi}^{k+1}$. By taking $\Pi = \widetilde{\Pi}^k\in \mathcal{U}(a,b)$ in the above inequality, we obtain that
\begin{equation*}
\begin{aligned}
\widehat{\varepsilon}_k \geq &\;  -\langle\nabla f(\Pi^k) + \gamma (\Pi^{k+1} - \Pi^k),  \widetilde{\Pi}^k - \widetilde{\Pi}^{k+1}\rangle  \\
= &\; -\langle \nabla f(\Pi^k), \widetilde{\Pi}^k - \widetilde{\Pi}^{k+1}\rangle - \gamma \langle \Pi^{k+1} - \Pi^k, \widetilde{\Pi}^k - \widetilde{\Pi}^{k+1}\rangle \\
= &\; -\langle \nabla f(\Pi^k), \widetilde{\Pi}^k - \widetilde{\Pi}^{k+1}\rangle \\
&\; - \frac{\gamma}{2} \left( \|\widetilde{\Pi}^k - \Pi^k\|_F^2-\|\widetilde{\Pi}^k - \Pi^{k+1} \|_F^2- \|\widetilde{\Pi}^{k+1} - \Pi^k\|_F^2 + \|\widetilde{\Pi}^{k+1} - \Pi^{k+1}\|_F^2\right)  \\
\geq &\; \langle \nabla f(\Pi^k), \widetilde{\Pi}^{k+1}-\widetilde{\Pi}^k\rangle
+ \frac{\gamma}{2} \|\widetilde{\Pi}^k - \Pi^{k+1} \|_F^2 + \frac{\gamma}{2}   \|\widetilde{\Pi}^{k+1} - \Pi^k\|_F^2  - 2\gamma  (m+n)(\varepsilon_{k-1}^2  + \varepsilon_{k}^2).
\end{aligned}
\end{equation*}
Since $f$ is $L_f$-smooth, we have that
\begin{equation*}
\begin{aligned}
f(\widetilde{\Pi}^{k+1}) \leq &\; f(\Pi^k) + \langle \nabla f(\Pi^k), \widetilde{\Pi}^{k+1} - \Pi^k\rangle + \frac{L_f}{2}\|\widetilde{\Pi}^{k+1} - \Pi^k\|_F^2, \\
f(\widetilde{\Pi}^{k}) \geq &\; f(\Pi^k) + \langle \nabla f(\Pi^k), \widetilde{\Pi}^{k} - \Pi^k\rangle  - \frac{L_f}{2}\|\widetilde{\Pi}^{k} - \Pi^k\|_F^2,
\end{aligned}
\end{equation*}
which leads to
\begin{equation*}
\begin{aligned}
f(\widetilde{\Pi}^{k+1}) \leq  &\;  f(\widetilde{\Pi}^{k}) + \langle \nabla f(\Pi^k), \widetilde{\Pi}^{k+1} - \widetilde{\Pi}^{k}\rangle  + \frac{L_f}{2}\|\widetilde{\Pi}^{k+1} - \Pi^k\|_F^2 + 2L_f (m+n)\varepsilon_{k-1}^2 \\
\leq &\;  f(\widetilde{\Pi}^{k}) + 2\gamma  (m+n)(\varepsilon_{k-1}^2  + \varepsilon_{k}^2) + \widehat{\varepsilon}_k +  2\gamma  (m+n)\varepsilon_{k-1}^2  \\
&\; - \frac{\gamma}{2} \|\widetilde{\Pi}^k - \Pi^{k+1} \|_F^2  - \frac{\gamma  - L_f}{2}  \|\widetilde{\Pi}^{k+1} - \Pi^k\|_F^2.
\end{aligned}
\end{equation*}
This then implies \eqref{eq-sufficient-descent} by setting $\xi_k:=4\gamma(m+n)(\varepsilon_{k-1}^2 + \varepsilon_{k}^2) + \widehat{\varepsilon}_k$.

\textbf{Statement 3.} It follows from \eqref{eq-sufficient-descent} that
\begin{equation*}
f(\widetilde{\Pi}^{k+1}) \leq f(\widetilde{\Pi}^{k}) + \xi_k.
\end{equation*}
Since $\sum_{k=0}^\infty \varepsilon_k < \infty$, one can verify that $\sum_{k=0}^\infty \xi_k < \infty$. Moreover, since $\mathcal{U}(a,b)$ is compact and $f$ is continuous, $f$ is bounded below on $\mathcal{U}(a,b)$. Then, we can conclude that $\{f(\widetilde{\Pi}^k)\}$ is convergent, i.e., the limit $\lim\limits_{k\to\infty}f(\widetilde{\Pi}^k)$ exists; see, e.g., \cite[Section 2.2]{polyak1987introduction}. From \eqref{eq-sufficient-descent} and the continuity of $f$, we further see that
\begin{equation*}
0 \leq \frac{\gamma}{2} \|\widetilde{\Pi}^k - \Pi^{k+1} \|_F^2 + \frac{\gamma  - L_f}{2}  \|\widetilde{\Pi}^{k+1} - \Pi^k\|_F^2
\leq  f(\widetilde{\Pi}^{k}) - f(\widetilde{\Pi}^{k+1}) + \xi_k \to 0,
\end{equation*}
as $k\to \infty$. This, together with $\gamma > L_f$, proves that
\begin{equation*}
\lim_{k\to\infty}\; \|\widetilde{\Pi}^{k+1} - \Pi^k\|_F = \lim_{k\to\infty}\; \|\widetilde{\Pi}^{k} - \Pi^{k+1}\|_F = 0.
\end{equation*}
Finally, since $\|\widetilde{\Pi}^{k} - \Pi^{k}\|_F\leq 2\sqrt{m+n}\,\|\mathcal{A}\Pi^k-r\|_2\leq2\sqrt{m+n}\,\varepsilon_{k-1}\to0$, we then obtain that any accumulation point of $\{\Pi^k\}$ is an accumulation point of $\{\widetilde{\Pi}^{k}\}$, and vice versa.

\textbf{Statement 4.} We only need to show that any accumulation point $\Pi^*\in \mathcal{U}(a,b)$ of $\{\widetilde{\Pi}^k\}$ is a stationary point of the GWOT problem \eqref{eq:gw}. Since $\{\widetilde{\Pi}^k\}$ is bounded (by Statement 1), it has at least one accumulation point. Suppose that $\Pi^*$ is an accumulation point and $\{\widetilde{\Pi}^{k_i}\}$ is a convergent subsequence such that $\lim\limits_{i\to \infty}\widetilde{\Pi}^{k_i} = \Pi^*$. Since $\|\Pi^{k_i} - \widetilde{\Pi}^{k_i}\|_F\to 0$, we see that $\lim\limits_{i\to \infty}\Pi^{k_i} = \Pi^*$ and from Statement 3 that
\begin{equation}\label{succlim}
\begin{aligned}
&\; \|\Pi^{k_i+1} - \Pi^{k_i}\|_F
\leq \|\Pi^{k_i+1} - \widetilde{\Pi}^{k_i}\|_F + \|\widetilde{\Pi}^{k_i} - \Pi^{k_i}\|_F \to 0,  \\
&\; \|\widetilde{\Pi}^{k_i+1} - \widetilde{\Pi}^{k_i}\|_F \leq \|\widetilde{\Pi}^{k_i+1} - \Pi^{k_i+1}\|_F + \|\Pi^{k_i+1} - \widetilde{\Pi}^{k_i}\|_F \to 0,
\end{aligned}
\end{equation}
as $i\to \infty$. Recall that $D^{k+1}=-\nabla f(\Pi^k) - \gamma(\Pi^{k+1} - \Pi^k)$ is an $\widehat{\varepsilon}_k$-subdifferential of $\delta_{\mathcal{U}(a,b)}$ at $\widetilde{\Pi}^{k+1}$, i.e.,
\begin{equation*}
-\gamma (\Pi^{k+1} - \Pi^k) \in \partial_{\widehat{\varepsilon}_k}\delta_{\mathcal{U}(a,b)}(\widetilde{\Pi}^{k+1}) + \nabla f(\Pi^k).
\end{equation*}
Then, passing to the limit in above relation and invoking \eqref{succlim}, the Lipschitz continuity of $\nabla f$, and the outer semicontinuity of $\partial_{\widehat{\varepsilon}_k}\delta_{\mathcal{U}(a,b)}$ (see, for example, \cite[Proposition 4.1.1]{hl1993convex}) with $\widehat{\varepsilon}_k\to0$ (due to $\varepsilon_{k}\to0$), we see that
\begin{equation*}
0 \in \partial \delta_{\mathcal{U}(a,b)}(\Pi^{*}) + \nabla f(\Pi^*),
\end{equation*}
which implies that $\Pi^*$ is a stationary point of the GWOT problem \eqref{eq:gw}.

Finally, pick two arbitrary accumulation points $\Pi^*$ and $\Pi^\infty$ associated with the subsequences $\widetilde{\Pi}^{k_i}$ and $\widetilde{\Pi}^{k_j}$, respectively. By the continuity of the function $f$ and the fact that $\{f(\widetilde{\Pi}^k)\}$ is convergent, we see that $f(\Pi^*) = \lim\limits_{i\to\infty}f(\widetilde{\Pi}^{k_i})= \lim\limits_{k\to\infty}f(\widetilde{\Pi}^k) = \lim\limits_{j\to\infty}f(\widetilde{\Pi}^{k_j}) = f(\Pi^\infty)$. Hence, we conclude that $f$ is constant on the set of accumulation points and complete the proof.
\end{proof}

\section{Proof of Proposition \ref{prop:projection-error-bound}}

\begin{lemma}[Uniform Hoffman error bound for projection]
\label{lem:uniform-hoffman-projection}
For each $Z\in\mathbb{R}^{n\times m}$, consider the projection problem
\begin{equation*}
\min_{\Pi\in\mathcal{U}(a,b)}~\frac{1}{2}\|\Pi-Z\|_F^2 .
\end{equation*}
Let $\Pi^*(Z)$ denote its unique solution. Then, there exists a constant $\kappa>0$, independent of all $Z$, such that for every $y\in\mathbb{R}^{n+m}$ and
\begin{equation*}
\Pi(y,Z):=\mathrm{Proj}_{\mathbb{R}_+^{n\times m}}\left(\mathcal{A}^*y+Z\right),
\end{equation*}
one has
\begin{equation*}
\|\Pi(y,Z)-\Pi^*(Z)\|_F\leq\kappa\,\|\mathcal{A}\Pi(y,Z)-r\|_2.
\end{equation*}
\end{lemma}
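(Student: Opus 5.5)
Set $\Pi:=\Pi(y,Z)$, $\Pi^*:=\Pi^*(Z)$, and $\Gamma:=\Pi-(\mathcal{A}^*y+Z)$. The Moreau decomposition for the cone $\mathbb{R}^{n\times m}_+$ gives $\Gamma\ge0$ and $\langle\Gamma,\Pi\rangle=0$. The identity $\Pi-Z=\mathcal{A}^*y+\Gamma$ shows that $(\Pi,y,\Gamma)$ satisfies every KKT condition of the projection problem except primal feasibility $\mathcal{A}\Pi=r$.

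\textbf{Step 1 (strong monotonicity).} The KKT system for $\Pi^*$ supplies multipliers $y^*,\Gamma^*\ge0$ with $\Pi^*-Z=\mathcal{A}^*y^*+\Gamma^*$ and $\langle\Gamma^*,\Pi^*\rangle=0$. Subtracting the two KKT identities and pairing with $\Pi-\Pi^*$ gives
\begin{equation*}
\|\Pi-\Pi^*\|_F^2=\langle y-y^*,\mathcal{A}\Pi-r\rangle+\langle\Gamma-\Gamma^*,\Pi-\Pi^*\rangle .
\end{equation*}
Complementarity and nonnegativity imply $\langle\Gamma-\Gamma^*,\Pi-\Pi^*\rangle=-\langle\Gamma,\Pi^*\rangle-\langle\Gamma^*,\Pi\rangle\le0$. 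The remaining term involves the possibly unbounded $y$, so this route alone does not give a $Z$-independent constant. I will use it only as a sanity check.

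\textbf{Step 2 (shadow point).} Instead I apply Lemma~\ref{lem:explicit-hoffman-Uab} to $\Pi\ge0$. It gives a feasible $\widetilde\Pi\in\mathcal{U}(a,b)$ with $\|\Pi-\widetilde\Pi\|_F\le2\sqrt{n+m}\,\|\mathcal{A}\Pi-r\|_2=:\rho$. Next, test the optimality of $\Pi^*$ against $\widetilde\Pi$:
\begin{equation*}
\langle\Pi^*-Z,\widetilde\Pi-\Pi^*\rangle\ge0 .
\end{equation*}
Then test the near-optimality of $\Pi$ against $\Pi^*$:
\begin{equation*}
\langle\Pi-Z,\Pi^*-\Pi\rangle=\langle\mathcal{A}^*y+\Gamma,\Pi^*-\Pi\rangle=\langle y,r-\mathcal{A}\Pi\rangle+\langle\Gamma,\Pi^*\rangle .
\end{equation*}
This again contains $y$, and the $1+\|y\|$ scaling in the algorithm hints that this quantity cannot be bounded uniformly.

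\textbf{Step 3 (polyhedral error bound; main argument).} To get a truly uniform $\kappa$, I will use Hoffman's lemma for polyhedral systems. Fix the active set $J:=\{(i,j):\Pi_{ij}>0\}$. On $J$ we have $\Gamma_{ij}=0$, so $\Pi_J=(\mathcal{A}^*y+Z)_J$, and off $J$ we have $\Pi_{ij}=0$. Hence $\Pi$ is the exact projection of $Z$ onto the affine-face set
\begin{equation*}
F_J(s):=\{X:\ X_{J^c}=0,\ \mathcal{A}X=s\},\qquad s:=\mathcal{A}\Pi,
\end{equation*}
with the sign-consistent multipliers $(y,\Gamma)$. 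So $\Pi$ is an exact KKT point of the projection problem with the right-hand side $r$ perturbed to $s$.

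It remains to bound $\|\Pi-\Pi^*\|_F$ by a multiple of $\|s-r\|_2$. The map $s\mapsto\mathrm{Proj}_{\{X\ge0,\ \mathcal{A}X=s\}}(Z)$ is the solution map of a strongly convex QP with polyhedral constraints and perturbed right-hand side. By the Lipschitz stability of such solution maps (Robinson/Hoffman), it is Lipschitz in $s$ on its domain. The Lipschitz constant depends only on the finitely many faces, i.e.\ on Hoffman constants of the matrices built from $\mathcal{A}$ and the coordinate restrictions, and not on $Z$. Take $\kappa$ to be the maximum of these constants over all subsets $J$, finitely many in total. Then $\Pi=\mathrm{Proj}_{\{X\ge0,\mathcal{A}X=s\}}(Z)$ and $\Pi^*$ is the same map evaluated at $r$, which gives the claim.

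\textbf{Main obstacle.} The difficulty is Step 3. I must justify that the QP solution map is globally Lipschitz in the right-hand side with a constant independent of the linear term $Z$. The plan is to run the Step 1 identity face by face: restricted to a fixed pair of active faces, the multiplier difference can be eliminated via Hoffman constants of $\mathcal{A}$ restricted to $J$, as in Li's piecewise-affine argument. Finiteness of the face collection then yields the uniform $\kappa$.
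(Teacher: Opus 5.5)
Your reduction is correct, and it is a genuinely different route from the paper's. The paper stays inside the KKT system: it splits the complementarity condition into finitely many polyhedral regimes, applies Hoffman's bound in each one (the coefficient matrices do not depend on $Z$), and takes the largest constant.

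You instead use $\Gamma$ to certify that $\Pi(y,Z)=\mathrm{Proj}_{\mathcal{U}_s}(Z)$, where $\mathcal{U}_s:=\{X\ge0:\ \mathcal{A}X=s\}$ and $s:=\mathcal{A}\Pi(y,Z)$. The lemma then becomes a $Z$-uniform Lipschitz bound for $s\mapsto\mathrm{Proj}_{\mathcal{U}_s}(Z)$ on the convex cone $\mathcal{A}(\mathbb{R}^{n\times m}_+)$, which contains both $s$ and $r$. Steps 1--2 can simply be dropped.

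The reformulation buys something. Hoffman's bound in a regime presupposes that the regime, with $\mathcal{A}\Pi=r$ imposed, has a solution, and the regime containing the inexact point need not have one. For example, if $\Pi^*(Z)>0$ entrywise while $\mathcal{A}^*y+Z$ has a negative entry, no regime contains both the inexact KKT triple and an exact one. Your formulation handles this change of support between $s$ and $r$ directly and gives an explicit constant.

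The step you flag as the main obstacle is, however, not delivered by the tools you name:
\begin{itemize}
\item Hoffman's lemma gives Hausdorff-Lipschitz dependence of $\mathcal{U}_s$ on $s$, which by itself yields only a H\"older-$\tfrac12$ estimate for projections.
\item Robinson's theorem on polyhedral multifunctions is a local upper-Lipschitz statement.
\item The Step 1 identity controls $y-y^*$ only when $\Pi(y,Z)$ and $\Pi^*(Z)$ lie on a common face, which is exactly what can fail.
\end{itemize}

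A short piecewise-affine argument closes the gap.
\begin{enumerate}
\item Let $J$ be the support of $X:=\mathrm{Proj}_{\mathcal{U}_s}(Z)$. Complementarity forces the multiplier to vanish on $J$, so $X_J=Z_J+\mathcal{A}_J^\dagger(s-\mathcal{A}_JZ_J)$ and $X_{J^c}=0$. Here $\mathcal{A}_J$ is $\mathcal{A}$ restricted to matrices supported on $J$, and $\dagger$ denotes the pseudoinverse. Call this affine map $\phi_J(s)$; its slope has norm $\|\mathcal{A}_J^\dagger\|_2$, independent of $Z$.
\item The map $g(t):=\mathrm{Proj}_{\mathcal{U}_{s+t(r-s)}}(Z)$ is continuous on $[0,1]$, by the H\"older estimate above.
\item The sets $T_J:=\{t\in[0,1]:\ g(t)=\phi_J(s+t(r-s))\}$ are closed and cover $[0,1]$. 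Start from some $J_0$ with $0\in T_{J_0}$. If $t_1:=\max T_{J_0}<1$, then some $T_{J_1}$ accumulates at $t_1$ from the right, hence contains $t_1$ and reaches beyond it; continue in the same way. Each $J$ is used at most once, so the chain reaches $t=1$ after finitely many steps.
\end{enumerate}
The triangle inequality along this chain gives the lemma with $\kappa:=\max_J\|\mathcal{A}_J^\dagger\|_2$. This is the classical global Lipschitz property of strongly convex QP solutions in the right-hand side.
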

\begin{proof}
For simplicity, we vectorize matrices and write the transport constraints in the form
\begin{equation*}
A\pi=r,\quad \pi\geq0,
\end{equation*}
where $\pi:=\mathrm{vec}(\Pi)$ and $A$ is the matrix representation of $\mathcal{A}$. Similarly, let $z:=\mathrm{vec}(Z)$. Then, the projection problem is equivalent to 
\begin{equation*}
\min_{A\pi=r,\,\pi\geq0}~\frac{1}{2}\|\pi-z\|_2^2 .
\end{equation*}
Its KKT system can be written as
\begin{equation*}
A\pi-r=0, \quad
\pi=\mathrm{Proj}_{\mathbb{R}_+^{nm}}(A^\top y+z),
\end{equation*}
or equivalently, there exists $s\in\mathbb{R}^{nm}_+$ such that
\begin{equation*}
\pi-z-A^\top y-s=0,\quad
\pi\geq0,\quad
s\geq0,\quad
\langle \pi,s\rangle=0.
\end{equation*}
Define the solution set of the KKT system by
\begin{equation*}
\mathcal{S}(z):=\left\{\,(\pi,y,s):
A\pi=r,~
\pi-z-A^\top y-s=0,~
0\leq \pi\perp s\geq0\,\right\}.
\end{equation*}
Since the primal projection problem is strongly convex, every point in $\mathcal{S}(z)$ has the same primal component, namely, $\pi^*(z)$.

We now use the polyhedral structure. Note that the complementarity condition $0\leq\pi\perp s\geq0$ is a finite union of polyhedral systems. More precisely, for each index set $\mathcal{I}\subseteq\{1,\ldots,nm\}$, consider the polyhedral regime
\begin{equation*}
\pi_i\geq0,\ s_i=0\quad (i\in\mathcal{I}),
\qquad
\pi_i=0,\ s_i\geq0\quad (i\notin\mathcal{I}).
\end{equation*}
On each such regime, the KKT system (if nonempty) becomes a linear system of equalities and inequalities whose coefficient matrices are independent of $z$; only the right-hand side depends affinely on $(r,z)$. Hoffman's error bound \cite{h1952on} for linear equalities and inequalities therefore gives a constant $\kappa_{\mathcal{I}}>0$, depending only on the corresponding coefficient matrix, such that the distance to the solution set of this regime is bounded by the residual of the regime. Because there are only finitely many index sets $\mathcal{I}$, we can take
\begin{equation*}
\kappa:=\max_{\mathcal{I}}\{\kappa_{\mathcal{I}}\} < \infty.
\end{equation*}
This constant is independent of $z$.

Now, fix $z$. Then, for any $y\in\mathbb{R}^{n+m}$, set
\begin{equation*}
\pi(y):=\mathrm{Proj}_{\mathbb{R}_+^{nm}}(A^\top y+z).
\end{equation*}
By the projection property, there exists $s\geq0$ such that
\begin{equation*}
\pi(y)-z-A^\top y-s=0,\quad
\pi(y)\geq0,\quad
s\geq0,\quad
\langle \pi(y),s\rangle=0.
\end{equation*}
Hence, this triple $(\pi(y),y,s)$ satisfies every KKT condition except possibly the affine feasibility equation $A\pi=r$. Therefore, the Hoffman error bound yields
\begin{equation*}
\mathrm{dist}\big((\pi(y),y,s),\,\mathcal{S}(z)\big)
\leq \kappa\,\|A\pi(y)-r\|_2.
\end{equation*}
Since every point in $\mathcal{S}(z)$ has primal component $\pi^*(z)$, we have
\begin{equation*}
\|\pi(y)-\pi^*(z)\|_2
\leq \mathrm{dist}\big((\pi(y),y,s),\,\mathcal{S}(z)\big)
\leq \kappa\,\|A\pi(y)-r\|_2.
\end{equation*}
Returning to matrix notation gives
\begin{equation*}
\|\Pi(y,Z)-\Pi^*(Z)\|_F
\leq \kappa\,\|\mathcal{A}\Pi(y,Z)-r\|_2 .
\end{equation*}
This completes the proof.
\end{proof}

\paragraph{Proof of Proposition \ref{prop:projection-error-bound}:}
Applying Lemma~\ref{lem:uniform-hoffman-projection} with $Z=\Pi^k-\frac{1}{\gamma}\nabla f(\Pi^k)$ and $y=y^{k+1}$ gives
\begin{equation*}
\|\Pi^{k+1}-\Pi^{k,*}\|_F
\leq \omega\|\mathcal{A}\Pi^{k+1}-r\|_2 .
\end{equation*}
The inexact condition \eqref{eq-inexact-ipg-subpro} then implies
\begin{equation*}
\|\Pi^{k+1}-\Pi^{k,*}\|_F
\leq \omega\frac{\varepsilon_k}{1+\|y^{k+1}\|_2}
\leq \omega\varepsilon_k .
\end{equation*}
This completes the proof. \hskip .5cm $\Box$

\section{Proof of Theorem \ref{thm-full-convergence}} \label{appendix-proof-full-convergence}

In this section, we establish the full sequential convergence. We first recall some standard notions in variational analysis. Let $\mathbb{E}$ be a real finite dimensional Euclidean space equipped with an inner product $\langle\cdot,\cdot\rangle$ and its induced norm $\|\cdot\|$. For a proper closed
function $h:\mathbb{E}\to\mathbb{R}\cup\{+\infty\}$, we write
$x^k\xrightarrow{h}x$ if $x^k\to x$ and $h(x^k)\to h(x)$. The limiting
subdifferential \cite[Definition~8.3]{rw1998variational} of $h$ at
$x\in\mathrm{dom}\,h$ is defined by
\begin{equation*}
\partial h(x)
:=\left\{d\in\mathbb{E}:
\exists\,x^k\xrightarrow{h}x, 
~d^k\to d, ~d^k\in\widehat{\partial}h(x^k)~\text{for all}~k
\right\},
\end{equation*}
where $\widehat{\partial}h$ denotes the Fr\'echet subdifferential. Specifically,
$d\in\widehat{\partial}h(x)$ if
\begin{equation*}
\liminf_{y\ne x,\,y\to x}
\frac{h(y)-h(x)-\langle d,y-x\rangle}{\|y-x\|}
\geq 0.
\end{equation*}
When $h$ is continuously differentiable or convex, this limiting
subdifferential reduces to the classical gradient or convex subdifferential,
respectively; see \cite[Exercise~8.8 and Proposition~8.12]{rw1998variational}.

We next recall the Kurdyka-{\L}ojasiewicz (KL) property (see \cite{abs2013convergence,bdl2007the,bst2014proximal} for more details), which is now a standard technical condition for establishing the convergence of the whole sequence in the nonconvex setting. For simplicity, let $\Phi_{\nu}$ ($\nu>0$) denote a class of concave functions $\varphi:[0,\nu) \rightarrow \mathbb{R}_{+}$ satisfying: (i) $\varphi(0)=0$; (ii) $\varphi$ is continuously differentiable on $(0,\nu)$ and continuous at $0$; (iii) $\varphi'(t)>0$ for all $t\in(0,\nu)$. The KL property is described as follows.

\begin{definition}[\textbf{KL property}]\label{property-KL}
Let $h:\mathbb{E}\to\mathbb{R}\cup\{+\infty\}$ be a proper closed function.
We say that $h$ satisfies the \textbf{Kurdyka-{\L}ojasiewicz (KL)} property at
$\bar{x}\in\mathrm{dom}\,\partial h$ if there exist $\nu\in(0,+\infty]$, a
neighborhood $\mathcal{V}$ of $\bar{x}$, and a function
$\varphi\in\Phi_{\nu}$ such that
\begin{equation*}
\varphi'(h(x)-h(\bar{x}))\,\mathrm{dist}(0,\partial h(x))\geq 1
\end{equation*}
for all
$x\in\mathcal{V}\cap\{x\in\mathbb{E}: h(\bar{x})<h(x)<h(\bar{x})+\nu\}$.
The function $h$ is called a KL function if it satisfies the KL property at
every point of $\mathrm{dom}\,\partial h$. 
\end{definition}

We also recall the uniformized KL property, which was established in \cite[Lemma 6]{BolteSabachTeboulle2014}.

\begin{proposition}[\textbf{Uniformized KL property}]\label{uniKL}
Let $h:\mathbb{E}\to\mathbb{R}\cup\{+\infty\}$ be a proper closed function, and
let $\Gamma\subseteq\mathbb{E}$ be a compact set. Suppose that $h\equiv\zeta$ on
$\Gamma$ for some constant $\zeta$ and that $h$ satisfies the KL property at
each point of $\Gamma$. Then there exist $\mu>0$, $\nu>0$, and
$\varphi\in\Phi_{\nu}$ such that
\begin{equation*}
\varphi'(h(x)-\zeta)\,\mathrm{dist}(0,\partial h(x))\geq 1
\end{equation*}
for all
$x\in\{x\in\mathbb{E}:\mathrm{dist}(x,\Gamma)<\mu\}\cap\{x\in\mathbb{E}:\zeta<h(x)<\zeta+\nu\}$.
\end{proposition}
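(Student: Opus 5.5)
The plan is the standard compactness argument: cover $\Gamma$ by finitely many KL neighborhoods, then combine the finitely many desingularizing functions into a single one by summing them. Summing is the right way to merge them, because a sum of functions in $\Phi_\nu$ stays in $\Phi_\nu$, and its derivative dominates each individual derivative.

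\textbf{Step 1 (local data).} For each $\bar{x}\in\Gamma$, the KL property at $\bar{x}$ provides $\nu_{\bar x}>0$, a neighborhood $\mathcal{V}_{\bar x}$ and $\varphi_{\bar x}\in\Phi_{\nu_{\bar x}}$. Shrink $\mathcal{V}_{\bar x}$ to an open ball $B(\bar{x},\epsilon_{\bar x})$ with $\epsilon_{\bar x}>0$. If $\nu_{\bar x}=+\infty$, replace it by $1$. Since $h(\bar x)=\zeta$, the inequality reads
\begin{equation*}
\varphi_{\bar x}'(h(x)-\zeta)\,\mathrm{dist}(0,\partial h(x))\geq 1
\quad\text{for } x\in B(\bar x,\epsilon_{\bar x}),\ \zeta<h(x)<\zeta+\nu_{\bar x}.
\end{equation*}

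\textbf{Step 2 (compactness).} The half-radius balls $B(\bar x,\epsilon_{\bar x}/2)$, $\bar x\in\Gamma$, form an open cover of the compact set $\Gamma$. Extract a finite subcover indexed by $x_1,\dots,x_p$ with radii $\epsilon_i:=\epsilon_{x_i}$. Set
\begin{equation*}
\mu:=\min_i \epsilon_i/2,\qquad \nu:=\min_i \nu_{x_i},\qquad \varphi:=\sum_{i=1}^p \varphi_{x_i}\big|_{[0,\nu)}.
\end{equation*}
Then $\varphi\in\Phi_\nu$, for the following reasons:
\begin{itemize}
\item it is a finite sum of concave functions, hence concave;
\item $\varphi(0)=0$;
\item it is $C^1$ on $(0,\nu)$ and continuous at $0$;
\item $\varphi'=\sum_i\varphi_{x_i}'>0$.
\end{itemize}

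\textbf{Step 3 (verification).} Take $x$ with $\mathrm{dist}(x,\Gamma)<\mu$ and $\zeta<h(x)<\zeta+\nu$.
\begin{itemize}
\item By compactness the distance is attained at some $\hat x\in\Gamma$, so $\|x-\hat x\|<\mu$.
\item The point $\hat x$ lies in some $B(x_i,\epsilon_i/2)$, so by the triangle inequality $\|x-x_i\|<\mu+\epsilon_i/2\le\epsilon_i$.
\item Also $h(x)-\zeta<\nu\le\nu_{x_i}$, so the local KL inequality at $x_i$ applies.
\end{itemize}
Since every $\varphi_{x_j}'$ is positive, $\varphi'(h(x)-\zeta)\ge\varphi_{x_i}'(h(x)-\zeta)$. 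Therefore
\begin{equation*}
\varphi'(h(x)-\zeta)\,\mathrm{dist}(0,\partial h(x))\ \ge\ \varphi_{x_i}'(h(x)-\zeta)\,\mathrm{dist}(0,\partial h(x))\ \ge\ 1.
\end{equation*}
If $\partial h(x)=\emptyset$, the distance is $+\infty$ and the inequality holds trivially.

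\textbf{Expected obstacle.} There is no deep difficulty here. The point needing care is the bookkeeping in Step 3: the half-radius trick is what places $x$, and not just the nearby point $\hat x$, inside the full KL ball around $x_i$. One must also check that restricting to the common level window $\nu=\min_i\nu_{x_i}$ keeps each $\varphi_{x_i}$ well defined, which holds because each $\varphi_{x_i}$ is defined on $[0,\nu_{x_i})\supseteq[0,\nu)$.
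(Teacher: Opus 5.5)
Your proof is correct and takes essentially the same route as the standard argument for this result, which the paper does not prove but cites from Bolte, Sabach and Teboulle (Lemma 6): cover $\Gamma$ by finitely many KL balls, take $\nu$ as the minimum of the local level windows, and take $\varphi$ as the sum of the local desingularizing functions. The only cosmetic differences are that you get $\mu$ from the half-radius trick, whereas the cited proof picks $\mu$ small enough that the $\mu$-neighborhood of $\Gamma$ lies inside the union of the balls, and that you handle $\nu_{\bar x}=+\infty$ explicitly, which that proof leaves implicit.
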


Let $\tau>1$ be an arbitrary rational constant. The subsequent analysis makes use of the following auxiliary merit function:
\begin{equation}\label{eq:Phi-tau}
\Psi_\tau(\Pi,t)
:= f(\Pi) + \delta_{\mathcal{U}(a,b)}(\Pi) + \frac{t^\tau}{\tau},
\quad \forall\,(\Pi,t)\in\mathbb{R}^{n\times m}\times\mathbb{R}_+.
\end{equation}
Since $\mathcal{U}(a,b)$ is a nonempty polytope, $f$ is quadratic, and $\tau>1$ is rational, the function $\Psi_\tau$ is proper, closed, and semialgebraic. Hence $\Psi_\tau$ satisfies the KL property; see, e.g., \cite{abrs2010proximal,abs2013convergence}.

\paragraph{Proof of Theorem \ref{thm-full-convergence}:}
\textbf{Statement 1.} From the definition of $\Pi^{k,*}$ in \eqref{defHstar} and the first-order optimality condition of the projection problem, we have that
\begin{equation}\label{eq-optimality-projection}
0 \in \partial \delta_{\mathcal{U}(a,b)}(\Pi^{k,*}) + \nabla f(\Pi^k) + \gamma (\Pi^{k,*} - \Pi^k),\quad \forall k\ge 0.
\end{equation}
Then, there exists a $D^{k,*}\in \partial \delta_{\mathcal{U}(a,b)}(\Pi^{k,*})$ such that
\begin{equation*}
D^{k,*} + \nabla f(\Pi^k) + \gamma (\Pi^{k,*} - \Pi^k) = 0, \quad \forall k\ge 0.
\end{equation*}
By the definition of the subdifferential of the indicator function $\delta_{\mathcal{U}(a,b)}$, the following inequality holds for any feasible $\Pi\in \mathcal{U}(a,b)$:
\begin{equation*}
\begin{aligned}
0\geq
&\; \langle D^{k,*}, \Pi - \Pi^{k, *}\rangle
= \langle - \nabla f(\Pi^k) - \gamma (\Pi^{k,*} - \Pi^k), \Pi - \Pi^{k, *}\rangle \\
= &\; -\langle \nabla f(\Pi^k), \Pi - \Pi^{k, *}\rangle- \gamma \langle \Pi^{k,*} - \Pi^k, \Pi - \Pi^{k, *}\rangle \\
= &\; \langle \nabla f(\Pi^k), \Pi^{k, *}-\Pi\rangle - \frac{\gamma}{2} \left(\|\Pi-\Pi^k\|_F^2 - \|\Pi-\Pi^{k,*}\|_F^2 - \|\Pi^{k,*} - \Pi^k\|_F^2\right).
\end{aligned}
\end{equation*}
Also, by the $L_f$-smoothness of $f$, we see that
\begin{equation*}
\begin{aligned}
f(\Pi^{k,*}) \leq &\; f(\Pi^k) + \langle \nabla f(\Pi^k), \Pi^{k,*}  - \Pi^k\rangle  + \frac{L_f}{2}\|\Pi^{k,*}  - \Pi^k\|_F^2, \\
f(\Pi) \geq &\; f(\Pi^k) + \langle \nabla f(\Pi^k), \Pi - \Pi^k\rangle - \frac{L_f}{2}\|\Pi - \Pi^k\|_F^2,
\end{aligned}
\end{equation*}
which implies that, for any $k\ge 0$,
\begin{equation*}
\begin{aligned}
f(\Pi^{k,*}) - f(\Pi)  \le &\; \langle \nabla f(\Pi^k), \Pi^{k,*}  - \Pi\rangle + \frac{L_f}{2}\|\Pi^{k,*}  - \Pi^k\|_F^2 + \frac{L_f}{2}\|\Pi - \Pi^k\|_F^2 \\
\le &\; \frac{\gamma}{2} \left(\|\Pi-\Pi^k\|_F^2 - \|\Pi-\Pi^{k,*}\|_F^2 - \|\Pi^{k,*} - \Pi^k\|_F^2\right)\\
&\; + \frac{L_f}{2}\|\Pi^{k,*}  - \Pi^k\|_F^2 + \frac{L_f}{2}\|\Pi - \Pi^k\|_F^2 \\
= &\; -\frac{\gamma}{2} \|\Pi-\Pi^{k,*}\|_F^2  - \frac{\gamma-L_f}{2} \|\Pi^{k,*} - \Pi^k\|_F^2 + \frac{\gamma+L_f}{2}\|\Pi-\Pi^k\|_F^2.
\end{aligned}
\end{equation*}
Now, let $\Pi = \Pi^{k-1,*}$ in the above, together with Proposition \ref{prop:projection-error-bound} and $\gamma>L_f$, we see that
\begin{equation*}
\begin{aligned}
f(\Pi^{k,*}) - f(\Pi^{k-1,*})
\leq &\; -\frac{\gamma}{2} \|\Pi^{k-1,*}-\Pi^{k,*}\|_F^2  - \frac{\gamma-L_f}{2} \|\Pi^{k,*} - \Pi^k\|_F^2 + \frac{\gamma+L_f}{2}\|\Pi^{k-1,*}-\Pi^k\|_F^2 \\
\leq &\; -\frac{\gamma-L_f}{2}\left( \|\Pi^{k-1,*}-\Pi^{k,*}\|_F^2 + \|\Pi^{k,*} - \Pi^k\|_F^2 \right) + \gamma\omega^2\varepsilon_{k-1}^2,
\end{aligned}
\end{equation*}
for all $k\geq 1$.

\textbf{Statement 2.} Since the feasible set $\mathcal{U}(a,b)$ is bounded, the sequence $\{\Pi^{k,*}\}$ is bounded. Moreover, since $\mathcal{U}(a,b)$ is compact and $f$ is continuous, $f$ is bounded below on $\mathcal{U}(a,b)$. From Statement 1 and $\gamma>L_f$, we see that
\begin{equation*}
f(\Pi^{k,*}) \leq f(\Pi^{k-1,*}) + \gamma\omega^2\varepsilon_{k-1}^2, \quad k\geq1.
\end{equation*}
This, together with $\sum_{k=0}^\infty \varepsilon_k^2 < \infty$ (due to $\sum_{k=0}^\infty \varepsilon_k < \infty$), implies that $\{f(\Pi^{k,*})\}$ is convergent; see e.g., \cite[Section 2.2]{polyak1987introduction}. This, together with the inequality in Statement 1, further implies that $\lim\limits_{k\to \infty}\|\Pi^{k,*}-\Pi^k\|_F = 0$. Consequently, $\{\Pi^{k,*}\}$ and $\{\Pi^k\}$ have the same set of accumulation points. By Theorem~\ref{thm-subsequential-convergence}, $\{\Pi^k\}$ and $\{\widetilde{\Pi}^k\}$ also share the same accumulation points, and thus the three sequences have the same set of accumulation points. Using these facts, we further have that $\lim\limits_{k\to\infty}f(\Pi^{k,*}) = f^*$, where $f^*$ is the constant given in Theorem \ref{thm-subsequential-convergence}.

\textbf{Statement 3.} In view of $\lim\limits_{k\to \infty}\|\Pi^{k,*}-\Pi^k\|_F = 0$, we only need to show that the sequence $\{\Pi^{k,*}\}$ is convergent. From Statement 1, by rearranging terms, dropping the term involving $\|\Pi^{k,*} - \Pi^k\|_F^2$ and denoting $\alpha:= (\gamma-L_f)/2,\;\beta := \gamma\omega^2>0$, we see that
\begin{equation*}
f(\Pi^{k,*}) + \beta\sum_{i=k}^\infty\varepsilon_i^2
\leq f(\Pi^{k-1,*}) + \beta\sum_{i=k-1}^\infty\varepsilon_i^2 - \alpha \|\Pi^{k,*} - \Pi^{k-1,*}\|_F^2, \quad \forall\;k\geq1.
\end{equation*}
Recall the definition of $\Psi_\tau$ in \eqref{eq:Phi-tau} and set $\sigma_k:= \tau \beta\sum_{i=k}^\infty\varepsilon_i^2$ for $k\geq1$. We see that
\begin{equation}\label{eq-decreasing-Psi}
\Psi_\tau(\Pi^{k,*}, \sigma_k^{1/\tau}) \leq \Psi_\tau(\Pi^{k-1,*}, \sigma_{k-1}^{1/\tau})- \alpha \|\Pi^{k,*} - \Pi^{k-1,*}\|_F^2,\quad \forall\;k\geq1.
\end{equation}
This implies that the sequence $\{\Psi_\tau(\Pi^{k,*}, \sigma_k^{1/\tau})\}$ is non-increasing for $k\geq1$ and it is bounded from below due to the convergence of $\{f(\Pi^{k,*})\}$. Since $\sigma_k\to 0$ as $k\to\infty$ (due to $\sum_{k=0}^\infty\varepsilon_k<\infty$), we also have that
\begin{equation*}
\lim_{k\to\infty}\Psi_\tau(\Pi^{k,*}, \sigma_k^{1/\tau})
= \lim_{k\to\infty}f(\Pi^{k,*}) = f^*.
\end{equation*}
Without loss of generality, we assume that $\Psi_\tau(\Pi^{k,*}, \sigma_k^{1/\tau}) > f^*$ for all $k\geq1$ (otherwise, $\Pi^{k,*}$ remains the same for all sufficiently large $k$).

From Statement 2, we see that for any accumulation point of the sequence $\{(\Pi^{k,*}, \sigma_k^{1/\tau})\}$, denoted as $(\Pi^*, 0)$, where $\Pi^*$ must be an accumulation point of the sequence $\{\Pi^{k,*}\}$, it must hold that $\Psi_\tau(\Pi^*,0)=f(\Pi^{*})=f^*$. Denote $\Lambda$ as the set of accumulation point of $\{(\Pi^{k,*}, \sigma_k^{1/\tau})\}$, which is nonempty and compact. Since $\Psi_\tau$ is a KL function, it then follows from the uniformized KL property (see Proposition \ref{uniKL}) that there exist $\mu>0$, $\nu>0$ and $\varphi\in\Phi_{\nu}$ such that
\begin{equation*}
\varphi'\big(\Psi_\tau(\Pi,t)-f^*\big)\,\mathrm{dist}\big(0, \partial\Psi_\tau(\Pi,t)\big)
\geq 1,
\end{equation*}
for all $(\Pi,t)$ satisfying $\mathrm{dist}((\Pi,t),\Lambda)<\mu$ and $ f^*<\Psi_\tau(\Pi,t)<f^*+\nu$. On the other hand, since $\mathrm{dist}((\Pi^{k,*}, \sigma_k^{1/\tau}), \Lambda)\to 0$ and $\{ \Psi_\tau(\Pi^{k,*}, \sigma_k^{1/\tau})\}$ is non-increasing, then for such $\mu$ and $\nu$, there exist $K\geq1$ such that $\mathrm{dist}((\Pi^{k,*}, \sigma_k^{1/\tau}), \Lambda) < \mu$ and $f^*<\Psi_\tau(\Pi^{k, *}, \sigma_k^{1/\tau})<f^*+\nu$. Consequently, it holds that
\begin{equation*}
\varphi'\big(\Psi_\tau(\Pi^{k,*},\sigma_k^{1/\tau})-f^*\big)\,\mathrm{dist}\big(0, \partial \Psi_\tau(\Pi^{k,*},\sigma_k^{1/\tau})\big) \geq 1,\quad \forall\,k \geq K.
\end{equation*}

We next derive an upper bound of $\mathrm{dist}\big(0, \partial \Psi_\tau(\Pi^{k,*},\sigma_k^{1/\tau})\big)$. First, it follows from \eqref{eq-optimality-projection} that
\begin{equation*}
W^{k,*} :=
\nabla f(\Pi^{k,*}) - \nabla f(\Pi^k) - \gamma (\Pi^{k,*} - \Pi^k)
\in \partial \delta_{\mathcal{U}(a,b)}(\Pi^{k,*}) + \nabla f(\Pi^{k,*}),
\quad \forall\,k\geq 1.
\end{equation*}
Moreover, by Proposition \ref{prop:projection-error-bound}, we have
\begin{equation*}
\begin{aligned}
\|W^{k,*}\|_F \leq &\; \|\nabla f(\Pi^{k,*}) - \nabla f(\Pi^k)\|_F + \gamma \|\Pi^{k,*} - \Pi^k\|_F
\leq (\gamma+L_f)\|\Pi^{k,*} - \Pi^k\|_F \\
\leq &\; (\gamma+L_f)\left(\|\Pi^{k,*} - \Pi^{k-1,*}\|_F + \|\Pi^{k-1,*} - \Pi^{k}\|_F\right) \\
\leq &\; 2\gamma\left(\|\Pi^{k,*} - \Pi^{k-1,*}\|_F + \omega\varepsilon_{k-1}\right).
\end{aligned}
\end{equation*}
Note that, for any $k\geq 1$,
\begin{equation*}
(W^{k,*}, \sigma_k^{1-1/\tau}) \in \left(\partial \delta_{\mathcal{U}(a,b)}(\Pi^{k,*}) + \nabla f(\Pi^{k,*})\right) \times\{ \sigma_k^{1-1/\tau}\} = \partial \Psi_\tau(\Pi^{k,*}, \sigma_k^{1/\tau}).
\end{equation*}
This implies that
\begin{equation*}
\mathrm{dist}(0, \partial \Psi_\tau(\Pi^{k,*}, \sigma_k^{1/\tau})) \leq \|W^{k,*}\|_F + \sigma_k^{1-1/\tau} \leq 2\gamma\left(\|\Pi^{k,*} - \Pi^{k-1,*}\|_F + \omega\varepsilon_{k-1}\right) + \sigma_k^{1-1/\tau}.
\end{equation*}

Now, we denote
\begin{equation*}
H_k:=\Psi_\tau(\Pi^{k,*},\sigma_k^{1/\tau}),\quad
\Delta_\varphi^k:= \varphi(H_k-f^*) - \varphi(H_{k+1}-f^*),\quad \forall\, k \geq K.
\end{equation*}
Since $\varphi$ is concave and $\{H_k\}$ is non-increasing and converges to $f^*$, we see that
\begin{equation*}
\Delta_\varphi^k
\geq \varphi'(H_k-f^*)(H_k-H_{k+1})
\geq \alpha\varphi'(H_k-f^*)\|\Pi^{k+1,*}-\Pi^{k,*}\|_F^2,\quad \forall\,k\geq K,
\end{equation*}
where the last inequality follows from \eqref{eq-decreasing-Psi}. Combining this with the KL inequality and the upper bound of $\mathrm{dist}(0, \partial \Psi_\tau(\Pi^{k,*}, \sigma_k^{1/\tau}))$ derived previously, we see that
\begin{equation*}
\begin{aligned}
\|\Pi^{k+1,*} - \Pi^{k,*}\|_F^2
\leq &\; \varphi'(H_k-f^*)\,\mathrm{dist}\big(0, \partial \Psi_\tau(\Pi^{k,*},\sigma_k^{1/\tau})\big)\|\Pi^{k+1,*}-\Pi^{k,*}\|_F^2  \\
\leq &\; \frac{2\gamma}{\alpha}\Delta_\varphi^k\left(\|\Pi^{k,*} - \Pi^{k-1,*}\|_F + \omega\varepsilon_{k-1} + \frac{1}{2\gamma}\sigma_k^{1-1/\tau}\right) \\
\leq &\; \left(\frac{\gamma}{\alpha}\Delta_\varphi^k + \frac{1}{2} \|\Pi^{k,*} - \Pi^{k-1,*}\|_F + \frac{\omega}{2}\varepsilon_{k-1} + \frac{1}{4\gamma}\sigma_k^{1-1/\tau}\right)^2, \quad \forall\,k\geq K.
\end{aligned}
\end{equation*}
Taking the square root of both sides of the above gives
\begin{equation*}
\|\Pi^{k+1,*} - \Pi^{k,*}\|_F \le \frac{1}{2} \|\Pi^{k,*} - \Pi^{k-1,*}\|_F + \frac{\gamma}{\alpha}\Delta_\varphi^k +  \frac{\omega}{2}\varepsilon_{k-1} + \frac{1}{4\gamma}\sigma_k^{1-1/\tau}, \quad \forall\,k\geq K,
\end{equation*}
which leads to
\begin{equation*}
\|\Pi^{k+1,*} - \Pi^{k,*}\|_F \le  \|\Pi^{k,*} - \Pi^{k-1,*}\|_F - \|\Pi^{k+1,*} - \Pi^{k,*}\|_F + \frac{2\gamma}{\alpha}\Delta_\varphi^k +  \omega\varepsilon_{k-1} + \frac{\sigma_k^{1-1/\tau}}{2\gamma}, ~~\forall\,k\geq K.
\end{equation*}
Summing the above relation from $k = K $ to $\infty$, we have
\begin{equation*}
\begin{aligned}
\sum_{k=K}^\infty  \|\Pi^{k+1,*} - \Pi^{k,*}\|_F \leq &\;  \|\Pi^{K,*} - \Pi^{K-1,*}\|_F + \frac{2\gamma}{\alpha} \varphi(H_K - f^*)
+ \omega\sum_{k = K}^\infty\varepsilon_{k-1} + \frac{1}{2\gamma} \sum_{k = K}^\infty\sigma_k^{1-1/\tau}.
\end{aligned}
\end{equation*}
This, together with $\sum_{k=0}^\infty\varepsilon_k<\infty$ and $\sum_{k=0}^\infty
\left(\sum_{i=k}^\infty\varepsilon_i^2\right)^{1-\frac{1}{\tau}}
<\infty$, implies that
\begin{equation*}
\sum_{k=K}^\infty \|\Pi^{k+1,*}-\Pi^{k,*}\|_F < \infty.
\end{equation*}
Hence, $\{\Pi^{k,*}\}$ is a Cauchy sequence and therefore converges. Since
$\|\Pi^{k,*}-\Pi^k\|_F\to0$, the whole sequence $\{\Pi^k\}$ converges to the
same limit, which is a stationary point of the GWOT problem~\eqref{eq:gw}.
This completes the proof.
\hskip .5cm $\Box$

\section{Detailed Computational Results}\label{appendix-detailed-results}

\begin{table}[htb!]
\centering
\caption{Results for $n = 100$. }
\label{tab:appendix_n100}
\begin{tabular}{llccccc}
\toprule
Method & $\epsilon$ & Loss & Sparsity & Feasibility & Accuracy & Time \\
\midrule
BAPG & $10^{-3}$ & - & - & - & - & - \\
  & $10^{-2}$ & 5.97e-01 & 0.83 & 8.03e-01 & 0.01 & 0.02 \\
  & $10^{-1}$ & 3.10e-02 & 0.48 & 2.27e-02 & 0.12 & 0.50 \\
  & $10^{0}$  & 3.05e-02 & 0.00 & 1.46e-03 & 0.25 & 0.58 \\ \hline
CG   & -         & 3.60e-02 & 0.99 & 0.00e+00 & 0.02 & 0.00 \\ \hline
EPGD & $10^{-3}$ & 3.11e-02 & 0.00 & 5.54e-05 & 0.21 & 0.38 \\
  & $10^{-2}$ & 5.35e-02 & 0.00 & 5.85e-17 & 0.06 & 0.00 \\
  & $10^{-1}$ & 5.36e-02 & 0.00 & 5.83e-17 & 0.04 & 0.00 \\
  & $10^{0}$  & 5.36e-02 & 0.00 & 5.76e-17 & 0.04 & 0.00 \\ \hline
PPA  & $10^{-3}$ & 3.53e-02 & 0.97 & 4.69e-05 & 0.10 & 1.83 \\
   & $10^{-2}$ & 3.08e-02 & 0.43 & 1.04e-05 & 0.25 & 2.61 \\
   & $10^{-1}$ & 3.07e-02 & 0.01 & 1.52e-06 & 0.25 & 15.27 \\
   & $10^{0}$  & 3.07e-02 & 0.00 & 1.92e-07 & 0.24 & 38.57 \\ \hline
iPG  & -         & 3.21e-02 & 0.99 & 7.63e-09 & 0.17 & 0.92 \\
\bottomrule
\end{tabular}
\end{table}

\begin{table}[htbp]
\centering
\caption{Results for $n = 200$. }
\label{tab:appendix_n200}
\begin{tabular}{llccccc}
\toprule
Method & $\epsilon$ & Loss & Sparsity & Feasibility & Accuracy & Time \\
\midrule
BAPG & $10^{-3}$ & - & - & - & - & - \\
  & $10^{-2}$ & 8.05e-01 & 0.96 & 7.18e-01 & 0.01 & 0.21 \\
  & $10^{-1}$ & 1.06e-01 & 0.23 & 4.19e-03 & 0.67 & 0.91 \\
  & $10^{0}$  & 1.10e-01 & 0.00 & 5.18e-04 & 0.55 & 1.64 \\ \hline
CG   & -         & 1.26e-01 & 0.99 & 0.00e+00 & 0.01 & 0.02 \\ \hline
EPGD & $10^{-3}$ & 1.11e-01 & 0.00 & 3.25e-05 & 0.51 & 0.49 \\
  & $10^{-2}$ & 1.43e-01 & 0.00 & 5.04e-17 & 0.03 & 0.01 \\
  & $10^{-1}$ & 1.43e-01 & 0.00 & 4.82e-17 & 0.03 & 0.01 \\
  & $10^{0}$  & 1.43e-01 & 0.00 & 5.01e-17 & 0.02 & 0.01 \\ \hline
PPA  & $10^{-3}$ & 1.16e-01 & 0.84 & 5.10e-05 & 0.39 & 3.77 \\
   & $10^{-2}$ & 1.12e-01 & 0.53 & 1.50e-05 & 0.51 & 8.97 \\
   & $10^{-1}$ & 1.10e-01 & 0.02 & 2.48e-06 & 0.55 & 22.54 \\
   & $10^{0}$  & 1.10e-01 & 0.00 & 2.67e-07 & 0.55 & 37.65 \\ \hline
iPG  & -         & 1.13e-01 & 0.99 & 1.34e-09 & 0.46 & 5.63 \\
\bottomrule
\end{tabular}
\end{table}

\begin{table}[htbp]
\centering
\caption{Results for $n = 300$. }
\label{tab:appendix_n300}
\begin{tabular}{llccccc}
\toprule
Method & $\epsilon$ & Loss & Sparsity & Feasibility & Accuracy & Time \\
\midrule
BAPG & $10^{-3}$ & - & - & - & - & - \\
  & $10^{-2}$ & 1.14e+00 & 0.98 & 7.48e-01 & 0.00 & 0.53 \\
  & $10^{-1}$ & 6.03e-02 & 0.29 & 6.15e-03 & 0.51 & 3.68 \\
  & $10^{0}$  & 6.09e-02 & 0.00 & 5.35e-04 & 0.52 & 4.70 \\ \hline
CG   & -         & 8.34e-02 & 1.00 & 0.00e+00 & 0.01 & 0.05 \\ \hline
EPGD & $10^{-3}$ & 6.07e-02 & 0.00 & 1.85e-05 & 0.51 & 6.18 \\
  & $10^{-2}$ & 1.03e-01 & 0.00 & 4.50e-17 & 0.02 & 0.01 \\
  & $10^{-1}$ & 1.03e-01 & 0.00 & 4.52e-17 & 0.02 & 0.01 \\
  & $10^{0}$  & 1.03e-01 & 0.00 & 4.44e-17 & 0.02 & 0.01 \\ \hline
PPA  & $10^{-3}$ & 6.65e-02 & 0.87 & 3.07e-05 & 0.43 & 5.39 \\
   & $10^{-2}$ & 6.10e-02 & 0.49 & 6.25e-06 & 0.52 & 13.11 \\
   & $10^{-1}$ & 6.32e-02 & 0.02 & 1.09e-06 & 0.48 & 23.05 \\
   & $10^{0}$  & 6.33e-02 & 0.00 & 1.28e-07 & 0.48 & 56.78 \\ \hline
iPG  & -         & 5.85e-02 & 1.00 & 1.15e-09 & 0.57 & 15.66 \\
\bottomrule
\end{tabular}
\end{table}

\begin{table}[htbp]
\centering
\caption{Results for $n = 400$. }
\label{tab:appendix_n400}
\begin{tabular}{llccccc}
\toprule
Method & $\epsilon$ & Loss & Sparsity & Feasibility & Accuracy & Time \\
\midrule
BAPG & $10^{-3}$ & - & - & - & - & - \\
  & $10^{-2}$ & 1.16e+00 & 0.99 & 7.15e-01 & 0.00 & 1.82 \\
  & $10^{-1}$ & 4.40e-02 & 0.21 & 4.05e-03 & 0.62 & 5.69 \\
  & $10^{0}$  & 4.41e-02 & 0.00 & 4.17e-04 & 0.65 & 8.28 \\ \hline
CG   & -         & 7.40e-02 & 1.00 & 0.00e+00 & 0.00 & 0.12 \\ \hline
EPGD & $10^{-3}$ & 4.44e-02 & 0.00 & 1.14e-05 & 0.62 & 8.37 \\
  & $10^{-2}$ & 9.26e-02 & 0.00 & 4.66e-17 & 0.02 & 0.03 \\
  & $10^{-1}$ & 9.26e-02 & 0.00 & 4.69e-17 & 0.01 & 0.03 \\
  & $10^{0}$  & 9.26e-02 & 0.00 & 4.60e-17 & 0.01 & 0.03 \\ \hline
PPA  & $10^{-3}$ & 4.63e-02 & 0.79 & 1.40e-04 & 0.63 & 4.89 \\
   & $10^{-2}$ & 4.44e-02 & 0.39 & 4.21e-06 & 0.64 & 13.13 \\
   & $10^{-1}$ & 4.43e-02 & 0.01 & 6.74e-07 & 0.64 & 39.81 \\
   & $10^{0}$  & 4.47e-02 & 0.00 & 1.17e-07 & 0.63 & 68.51 \\ \hline
iPG  & -         & 3.95e-02 & 1.00 & 1.44e-09 & 0.71 & 31.89 \\
\bottomrule
\end{tabular}
\end{table}

\begin{table}[htbp]
\centering
\caption{Results for $n = 500$. }
\label{tab:appendix_n500}
\begin{tabular}{llccccc}
\toprule
Method & $\epsilon$ & Loss & Sparsity & Feasibility & Accuracy & Time \\
\midrule
BAPG & $10^{-3}$ & - & - & - & - & - \\
  & $10^{-2}$ & 9.09e-01 & 0.99 & 5.86e-01 & 0.00 & 7.45 \\
  & $10^{-1}$ & 4.71e-02 & 0.20 & 3.62e-03 & 0.57 & 12.33 \\
  & $10^{0}$  & 5.01e-02 & 0.00 & 4.38e-04 & 0.52 & 18.35 \\ \hline
CG   & -         & 7.52e-02 & 1.00 & 0.00e+00 & 0.00 & 0.21 \\ \hline
EPGD & $10^{-3}$ & 5.66e-02 & 0.00 & 1.59e-05 & 0.36 & 2.24 \\
  & $10^{-2}$ & 9.19e-02 & 0.00 & 4.04e-17 & 0.01 & 0.05 \\
  & $10^{-1}$ & 9.19e-02 & 0.00 & 4.22e-17 & 0.01 & 0.05 \\
  & $10^{0}$  & 9.19e-02 & 0.00 & 4.08e-17 & 0.01 & 0.05 \\ \hline
PPA  & $10^{-3}$ & 4.52e-02 & 0.76 & 3.68e-05 & 0.62 & 10.46 \\
   & $10^{-2}$ & 4.76e-02 & 0.45 & 3.78e-06 & 0.57 & 30.32 \\
   & $10^{-1}$ & 5.00e-02 & 0.01 & 6.57e-07 & 0.52 & 71.51 \\
   & $10^{0}$  & 5.31e-02 & 0.00 & 8.47e-08 & 0.48 & 100.31 \\ \hline
iPG  & -         & 3.25e-02 & 1.00 & 9.26e-10 & 0.85 & 38.41 \\
\bottomrule
\end{tabular}
\end{table}


\end{document}